\definecolor{csim}{rgb}{0.3,0.7,0}
\def\newblock{\ }%
\newcommand{\R}{\mathbb{R}}
\newcommand{\N}{\mathbb{N}}
\newcommand{\E}{\mathbb{E}}
\newtheorem{lemma}{Lemma}
\newtheorem{theorem}{Theorem}
\newtheorem{corollary}{Corollary}
\newtheorem{proposition}{Proposition}
\numberwithin{equation}{section}
\def\IND{\mathbbm{1}}
\newcommand{\EXP}{\mathbb{E}}
\newcommand{\PROB}{\mathbb{P}}
\newcommand{\var}{\mathrm{Var}}
\begin{document}

\title{Broadcasting in random recursive {\sc dags}
    \thanks{Simon Briend acknowledges the support of Région Ile de France.
G\'abor Lugosi acknowledges the support of Ayudas Fundación BBVA a
Proyectos de Investigación Científica 2021 and
the
Spanish Ministry of Economy and Competitiveness grant PID2022-138268NB-I00, financed by MCIN/AEI/10.13039/501100011033,
FSE+MTM2015-67304-P, and FEDER, EU.}
\author{
    Simon Briend \\
  Department of Mathematics, \\
  Unidistance Suisse, \\
  3900, Brigue, Switzerland \\
  \and
  Luc Devroye  \\
  School of Computer Science\\
  McGill University \\
  Montreal, Canada
\and  
G\'abor Lugosi \\
Department of Economics and Business, \\
Pompeu  Fabra University, Barcelona, Spain \\
ICREA, Pg. Lluís Companys 23, 08010 Barcelona, Spain \\
Barcelona School of Economics
}
}

\maketitle

\begin{abstract}

  A uniform $k$-{\sc dag} generalizes the uniform random recursive
  tree by picking $k$ parents uniformly at random from the existing
  nodes. It starts with $k$ ''roots''. Each of the $k$ roots is
  assigned a bit. These bits are propagated by a noisy channel. The
  parents' bits are flipped with probability $p$, and a majority vote
  is taken. When all nodes have received their bits, the $k$-{\sc dag}
  is shown without identifying the roots. The goal is to estimate the
  majority bit among the roots. We identify the threshold for $p$ as a
  function of $k$ below which the majority rule among all nodes yields
  an error $c+o(1)$ with $c<1/2$.
Above the threshold the majority rule errs with
  probability $1/2+o(1)$.
\end{abstract}

\section{Introduction}

\vspace{0.5cm}

The interest in network analysis has been growing, in part due to its use in
communication technologies, social network studies, and biology, see
\citet{10.1093/oso/9780198709893.001.0001}. The problem we study here
is the one of broadcasting on random graphs. We study the setting
where a bit propagates with noise and we want to infer the value of
the original bit. The question is not if and how the information
propagates, but if there is a signal propagating on the graph, or only
noise. Variations of this binary classification problem have been
studied. For example, in the root-bit estimation problem, the root of
a tree has a bit $0$ or $1$. The value of this bit propagates from the
root to the leafs, and at each propagation from a vertex to the next
it mutates (flips the bit) with probability $p$. One can try to infer
the root's bit value from observing all the bits of the graph or only
the leaf bits. This question was first formulated in
\citet{10.1214/aoap/1019487349} on general trees, where it was shown
that root bit reconstruction is possible depending upon a condition on
the branching number. More recently, the case of random recursive
trees
(\citet{AdDeLuVe22,desmarais2021broadcasting})
has been studied. Other variations of these problems on trees include
looking at asymmetric flip probabilities (\citet{10.1214/10-AOP584}),
non-binary vertex values (\citet{mossel2001reconstruction}) and
robustness to perturbation (\citet{janson2004robust}). We refer the
reader to \citet{mossel2004survey} for a survey of reconstruction
problems on trees. Many problems are described by more general graphs
rather than trees. The original broadcasting question has been studied
on deterministic graphs (\citet{harutyunyan2020new}) and Harary graphs
(\citet{7023754}, for example). We are interested in the problem of
noisy propagation in the spirit of the root-bit reconstruction
(\citet{10.1214/aoap/1019487349}), but on a class of random graphs
that we call $k$-{\sc dag} (for directed acyclic graph). A similar
problem -- for a different class of random {\sc dag}s -- has been
studied in \citet{Makur_2020}.
In a related probelm, \citet{antunovic2016coexistence} studied the
case of the preferential attachment model, where initial nodes have a
color and the color of the new nodes is a function of the colors of their neighbors.

Since we track the proportion of zero bits in our graph, we cast the
process as an urn model. A similar reformulation was already done in
\citet{AdDeLuVe22} to study majority
voting properties of broadcasting on random recursive trees. The
proportion of zero bits and the bit assignment procedure can be viewed
as random processes with reinforcement. A review of results can be
found in \citet{Pemantle_2007} and is extensively used, alongside
results of non-convergence found in
\citet{pemantle1990nonconvergence}. As in
\citet{AdDeLuVe22}, we make ample
use of the properties of P\'olya urns
(\citet{wei1979generalized,janson2004functional,knape2014polya}). Variations
of the P\'olya urn model that are useful for our analysis include an
increase of the number of colors over time
(\citet{https://doi.org/10.48550/arxiv.2204.03470}), the selection of
multiple balls in each draw (\citet{MR3666709}), and randomization in
the color of the new ball
(\citet{10.1214/19-ECP226,zhang2022convergence}). We
note, in particular, the multi-ball draw with a linear randomized
replacement rule of \citet{CRIMALDI2022270}. In the present paper, we
consider multi-ball draws, but with non-linear randomized replacement.

The paper is organized as follows. 
After introducing the mathematical model in Section \ref{sec:model}, in Section \ref{sec:results}
we present the main result of the paper (Theorem \ref{thm:MainThm}) 
that shows that there are three different regimes of the value of the
mutation probability that characterize the asymptotic behavior of the majority rule.
In Section \ref{sec:DifRegime} we discuss the three regimes of $p$. 
In Section \ref{sec:convergence} we establish convergence properties of the global proportion 
of both bit values assigned to vertices and in Section \ref{sec:majority} we finish the proof 
of Theorem \ref{thm:MainThm} by studying the probability of error in all three regimes. 
Finally, in Section \ref{sec:lower} we establish a lower bound for the probability of error 
that holds uniformly for all mutation probabilities. We conclude the paper by discussing 
avenues for further research.

\subsection{The model}
\label{sec:model}

We start by describing the evolution of the uniform random recursive $k$-{\sc dag} and the assigned bit values
that we represent by two colors; red and blue.

Let us fix an odd integer $k>0$. The growth process is initiated at time $k$. At time $k$, the graph consists of
$k$ isolated vertices.
A fraction $R_k$ are red and a fraction
$B_k=1-R_k$ are blue. We set $R_1=\cdots=R_k$ and
$B_1=\cdots=B_k$. The network is grown recursively by adding a new
colored vertex and at most $k$ edges at each time step. At time $n$,
a new vertex $n$ connects to a sample of $k$ vertices chosen
uniformly at random with replacement among the $n-1$ previous
vertices. (Possible multiple edges are collapsed into one so that the graph remains simple.)
The color of vertex $n$ is determined by the following
randomized rule:

\begin{itemize}
\item the colors of the $k$ selected parents are observed;
\item each of these is independently flipped with probability $p$ (if a parent is selected more than once, its color is flipped independently for each selection);
\item the color of vertex $n$ is chosen according to the majority vote
  of the flipped parent colors
 (i.e., there are exactly $k$ votes).
\end{itemize}

\begin{figure}[H]
\begin{center}
\includegraphics[scale=0.8]{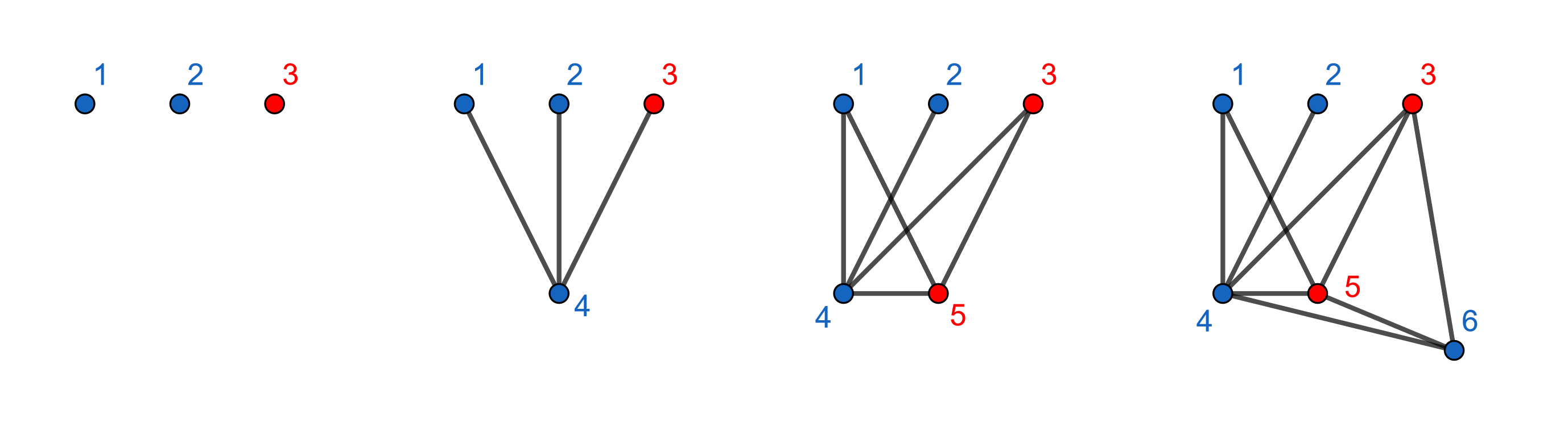}
\end{center}
\caption{A realisation of the process up to time 6, for $k=3$, starting with $R_3=1/3$.}
\end{figure}

If one is only interested in the evolution of the proportion of red and blue vertices (but not the structure of the graph), one may 
equivalently describe it by an urn model with multiple draws and random (nonlinear) replacement. 
The urn process is defined as follows.
The urn is initialized with an odd number $k$ of balls, a fraction $R_k$ being red and $B_k=1-R_k$ blue.
At each time $n \ge k+1,$

\begin{itemize}
\item $k$ balls are drawn from the urn, uniformly at random with replacement, and returned to the urn;
\item the 
  color of each drawn ball is flipped with probability $p$
  (i.e., a drawn ball that is red is observed as blue with probability $p$);
\item  a new ball is added to the urn, whose color is chosen as the
  majority of the $k$ observed colors.
\end{itemize}

In the root-bit estimation problem considered here, the statistician
has access to an unlabelled and undirected version of the graph at time $n$, along with the vertex colors.
The goal of the statistician is to estimate the colors assigned to the $k$ roots.
More precisely, based on the observed graph, one would like to guess 
the majority color at time $k$. 

This problem has been studied in depth by \cite{AdDeLuVe22} in the case when $k=1$, that is, when the 
produced graph is a uniform random recursive tree. 
Two types of methods for root-bit estimation were studied in \cite{AdDeLuVe22}.
One is based on first trying to localize the root of the tree--disregarding the vertex colors.
If one finds a vertex that is close to the root, one may use the color of that vertex as a guess
for the root color. Such a vertex is the centroid of the tree. Indeed, it is shown in \cite{AdDeLuVe22} that
the color of the centroid is a nearly optimal estimate of the root color. In the same paper, 
the majority rule is also studied. This method disregards the structure of the tree and guesses the
root color by taking a majority vote among all vertices. It is shown that for small mutation probabilities 
the majority rule is also nearly optimal.

In the more general problem considered in this paper, one may also try to estimate the colors of the $k$ roots
by finding nearby vertices. However, this problem becomes significantly more challenging as the 
$k$-{\sc dag} does not have a natural centroid. The interested reader is referred to the recent paper
of Briend, Calvillo, and Lugosi \cite{BrCaLu23} on root finding in random $k$-{\sc dag}s.
Instead of pursuing this direction, we focus on the majority vote. 
More precisely, we are interested in characterizing the values of the mutation probability $p$
such that the asymptotic probability of error is strictly better than random guessing.

At time $n$, the majority vote, denoted by $b^{maj}_n$, is defined as follows:

\begin{equation*} 
b^{maj}_n \ = \ \begin{cases}
 & \text{``R'' (red) if } R_n>1/2 \\
 \\
 & \text{``B''(blue) if } R_n<1/2 \\
 \\
 &\text{Ber}(1/2)\text{ if } R_n=1/2 \text{ (a random coin flip)} ~.
\end{cases}
\end{equation*}
We define the probability of error by
\[
 R^{maj}(n,p)=\PROB\left\{ b^{maj}_n\neq b^{maj}_k \right\}~.
\]
Note that $b^{maj}_k$ depends on the initial vertex colors that are
assumed to be chosen arbitrarily and fixed. Hence, $R^{maj}(n,p)$ is also a function
of the initial proportion $R_k$ but to avoid heavy notation, we supress this dependence.

\subsection{Related results and our contribution}
\label{sec:results}

Our broadcasting model is an extension of the broadcasting on uniform
random recursive trees that was extensively studied in
\citet{AdDeLuVe22}. In this problem, $k=1$ and the only parameter is
$p$, the mutation probability. For the majority voting rule, they
prove the following:

 \begin{enumerate}[label=(\roman*)]
\item There exists a constant $c>0$ such that 
  $$ \limsup_{n\to\infty} R^{maj}(n,p)\leq cp~. $$
\item
  For all $p\in (0,1/2]$,
  \[
     \lim_{n\to \infty} R_n = \frac{1}{2} \quad \text{with probability
       one}~.
   \]
\item For $p\in [0,1/4)$
 $$ \limsup_{n\to\infty} R^{maj}(n,p)<\frac{1}{2} ~.$$
 \item For $p\in [1/4,1/2]$
 $$ \limsup_{n\to\infty} R^{maj}(n,p)=\frac{1}{2}~. $$
\end{enumerate}

In other words, even though the proportion of vertices
that have the same color as the root converges to $1/2$, for mutation
probabilities smaller than $1/4$, sufficient information is preserved
about the root color for the majority vote to work with a nontrivial probability.

We generalize these results to $k$-{\sc dags} and characterize the
values of $p$ for which majority voting outperforms random guessing.
In order to state the main result of the paper, we introduce some
notation.

For any odd positive integer $k$, let 
\begin{equation}\label{eq:defalpha}
\alpha_k:= \frac{1}{2^{k-2}}\sum_{i>k/2}^k \binom{k}{i}(i-k/2) =4\EXP\left[ \left(\text{Bin}(k,1/2)-\frac{k}{2}\right)_+ \right]~.
\end{equation}
For example,  $\alpha_1=1$, $\alpha_3=3/2$, and by a simple
application of the central limit theorem, for large $k$,
\begin{equation}\label{eq:AlphaAsymptotic}
\alpha_k\sim \sqrt{\frac{2k}{\pi}}~.
\end{equation}

In the statement of our main theorem, we assume,
without loss of generality, that initially red vertices are in 
majority, that is, $R_k > 1/2$.

\begin{tcolorbox}
\begin{theorem}\label{thm:MainThm}
Let $k$ be an odd positive integer and consider the broadcasting
process on a random $k$-{\sc dag} described above. Assume that initially $R_k > 1/2$.
\begin{enumerate}[label=(\roman*)]
\item If $p<\frac{1}{2}-\frac{1}{2\alpha_k}$, then there exist
  $\beta_1 \in (0,1/2)$ and $\beta_2=1-\beta_1$ (whose value only
  depends on $k$ but not on the initial color configuration) such that
  \[
    \PROB\left\{ R_n\to \beta_1\right\}+\PROB\left\{ R_n\to
      \beta_2\right\}=1
    \quad \text{and} \quad \PROB\left\{ R_n\to
      \beta_1\right\}<\PROB\left\{ R_n\to \beta_2\right\}~.
  \]
In particular, regardless of the initial value of $R_k$, 
$$ \limsup_{n\to\infty}R^{maj}(n,p)<\frac{1}{2}~. $$
\item If $\frac{1}{2}-\frac{1}{2\alpha_k}\leq p<\frac{1}{2}-\frac{1}{4\alpha_k}$, then $R_n\to 1/2$ a.s. and
$$ \limsup_{n\to\infty}R^{maj}(n,p)<\frac{1}{2}~. $$
\item If $\frac{1}{2}-\frac{1}{4\alpha_k}\leq p\le \frac{1}{2}$ then $R_n\to1/2$ a.s. and
$$ \lim_{n\to\infty}R^{maj}(n,p)=\frac{1}{2}~. $$
\end{enumerate}
\end{theorem}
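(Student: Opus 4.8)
The plan is to treat the urn as a one-dimensional stochastic approximation driven by the proportion $R_n$ of red balls and to reduce the whole statement to the fixed-point structure and the fluctuations of $R_n$. Adding one ball at time $n$ gives $R_{n+1}-R_n=\frac{1}{n+1}\bigl(\one[\text{ball }n+1\text{ is red}]-R_n\bigr)$, hence $\EXP[R_{n+1}-R_n\mid\mathcal F_n]=\frac{1}{n+1}(f(R_n)-R_n)$ with replacement function
\[
f(x)=\PROB\{\mathrm{Bin}(k,q(x))>k/2\},\qquad q(x)=p+(1-2p)x,
\]
where $q(x)$ is the chance that a single drawn-then-flipped ball looks red. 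First I would record the exact symmetry $f(1-x)=1-f(x)$, so $x=1/2$ is always a fixed point, and compute the slope there: since $q(1/2)=1/2$ and $\frac{d}{dq}\PROB\{\mathrm{Bin}(k,q)\ge(k+1)/2\}\big|_{q=1/2}=k\binom{k-1}{(k-1)/2}2^{-(k-1)}$, the chain rule together with the identity $\sum_{i>k/2}(i-k/2)\binom{k}{i}=\frac{k}{2}\binom{k-1}{(k-1)/2}$ gives $f'(1/2)=\alpha_k(1-2p)$. This one computation already produces both thresholds: $f'(1/2)=1$ exactly at $p=\frac12-\frac{1}{2\alpha_k}$, and the local attraction rate $\lambda:=1-f'(1/2)=1-\alpha_k(1-2p)$ equals $1/2$ exactly at $p=\frac12-\frac{1}{4\alpha_k}$.

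Next I would fix the fixed-point picture from the shape of $f$. As $q$ is affine and $g(q)=\PROB\{\mathrm{Bin}(k,q)>k/2\}$ has a symmetric unimodal derivative peaking at $q=1/2$, the map $f$ is convex on $[0,1/2]$, concave on $[1/2,1]$, and point-symmetric about $(1/2,1/2)$. Hence when $p<\frac12-\frac{1}{2\alpha_k}$ (so $f'(1/2)>1$) the equation $f(x)=x$ has exactly three roots $\beta_1<1/2<\beta_2=1-\beta_1$, with $1/2$ unstable and $\beta_1,\beta_2$ stable, whereas when $p\ge\frac12-\frac{1}{2\alpha_k}$ the point $1/2$ is the unique, stable root. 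For part (i) the standard theory of reinforced processes gives that $R_n$ converges a.s.\ to the zero set of the drift, and Pemantle's non-convergence results exclude the unstable point, so $\PROB\{R_n\to\beta_1\}+\PROB\{R_n\to\beta_2\}=1$. The strict inequality follows from a monotone coupling in the initial proportion together with red--blue symmetry: $x_0\mapsto\PROB\{R_n\to\beta_2\mid R_k=x_0\}$ is nondecreasing, equals $1/2$ at $x_0=1/2$, and is strictly larger for $x_0>1/2$ because the drift $f-\mathrm{id}$ is strictly positive on $(1/2,\beta_2)$. Since $\beta_2>1/2>\beta_1$ and the initial majority is red, the error tends to $\PROB\{R_n\to\beta_1\}<1/2$.

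Parts (ii) and (iii) share $R_n\to1/2$ a.s.\ (unique stable root), so the difference is purely one of fluctuations. Writing $D_n=R_n-1/2$ and linearizing the drift, $D_{n+1}=(1-\frac{\lambda}{n+1})D_n+\frac{1}{n+1}\xi_{n+1}+O(D_n^2/n)$, where $\xi_{n+1}$ is a bounded martingale difference whose conditional variance is bounded away from $0$ near $1/2$. Rescaling by $a_n\sim n^{\lambda}$ turns $M_n=a_nD_n$ into an approximate martingale whose predictable quadratic variation grows like $\sum_j j^{2\lambda-2}$, a sum that converges iff $\lambda<1/2$. In part (ii) ($\lambda<1/2$) this yields $n^{\lambda}D_n\to W$ a.s.\ and in $L^2$ with $W$ nondegenerate; since $M_n$ starts from $a_kD_k\propto R_k-1/2>0$ one gets $\EXP W>0$, and the directional bias $\PROB\{W>0\}>1/2$ again follows from stochastic monotonicity in $x_0$ and the symmetric case $x_0=1/2$, so $\PROB\{R_n<1/2\}\to\PROB\{W<0\}<1/2$ and $\limsup R^{maj}<1/2$. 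In part (iii) ($\lambda\ge1/2$) the quadratic variation diverges, the initial-condition contribution $a_kD_k/a_n\sim n^{-\lambda}$ is negligible against the fluctuation scale $n^{-1/2}$, and a martingale central limit theorem (with a logarithmic correction at $\lambda=1/2$) shows $\sqrt n\,D_n$ converges to a centered Gaussian, whence $\PROB\{R_n<1/2\}\to1/2$ and $R^{maj}\to1/2$.

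The main obstacle is exactly the fluctuation analysis of parts (ii)--(iii). Because the replacement here is nonlinear, multi-draw and randomized, off-the-shelf P\'olya-urn limit theorems (which assume linear replacement) do not apply, so the martingale $M_n$ and the dichotomy at $\lambda=1/2$ must be built by hand from the linearization above, with careful control of the accumulated $O(D_n^2/n)$ error and of the boundary rates $\lambda\in\{0,1/2\}$. The second delicate point is that the majority rule needs the directional statement $\PROB\{W>0\}>1/2$, which is strictly stronger than the easy moment bound $\EXP W>0$; supplying it is where the monotone coupling in the initial proportion and the exact red--blue symmetry $f(1-x)=1-f(x)$ do the essential work.
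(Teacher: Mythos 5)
Your setup, threshold computation, and the treatment of part (i) (Pemantle convergence to the zero set of the drift, exclusion of the unstable point $1/2$, symmetry of $g$) match the paper. For the error probabilities in parts (ii) and (iii), however, you take a genuinely different route -- a direct fluctuation analysis of $n^{\lambda}(R_n-1/2)$ -- where the paper instead proves a crossing-time lemma ($\limsup_n R^{maj}(n,p)<1/2$ if and only if the majority flips for the first time at $T=+\infty$ with positive probability, because after the first visit to $1/2$ the process is exactly symmetric) and then couples $R_n$ to the \emph{linear} process $R^*_n$ with drift $g'(1/2)(t-1/2)$, i.e.\ to the $k=1$ tree process with flip probability $-g'(1/2)/2$, whose majority-vote behavior is known on either side of $1/4$. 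That reduction is what lets the paper avoid any fluctuation theory for the nonlinear multi-draw urn.

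Your route has a concrete gap in part (ii). You deduce $\limsup R^{maj}<1/2$ from $\PROB\{W>0\}>1/2$, and you propose to get the latter from red--blue symmetry at $x_0=1/2$ plus stochastic monotonicity in $x_0$. But at $x_0=1/2$ symmetry only gives $\PROB\{W>0\}=\tfrac12\bigl(1-\PROB\{W=0\}\bigr)\le 1/2$, and monotone coupling then yields $\PROB\{W(x_0)>0\}\ge \PROB\{W(1/2)>0\}$, a bound that is itself at most $1/2$. To conclude you would need both that $W$ has no atom at $0$ and a \emph{strict} monotonicity statement; neither is supplied, and ruling out an atom at $0$ for a nonlinear, multi-draw, randomized-replacement urn is a genuinely hard step (it is precisely the kind of distributional control that off-the-shelf P\'olya-urn theorems do not give here, as you yourself note). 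The paper's Lemma on the flip time $T$ sidesteps this entirely: it reduces the claim to $\PROB\{\forall n:\ R_n>1/2\}>0$, which is then proved by an explicit coupling and a Hoeffding/union-bound computation. A second, independent failure point is the left endpoint $p=\tfrac12-\tfrac{1}{2\alpha_k}$, which belongs to regime (ii) but has $\lambda=1-f'(1/2)=0$: there the linearization carries no contraction, the drift near $1/2$ is cubic, and your claimed a.s.\ convergence of $n^{\lambda}D_n$ to a nondegenerate limit would read $D_n\to W\neq 0$, contradicting $R_n\to 1/2$. The paper's comparison with a linear process of slope $-2q$, $q=-g'(1/2)/2+\epsilon<1/4$, handles this boundary case uniformly. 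Part (iii) via a martingale CLT is plausible in outline, but it inherits the same burden of building the nonlinear-urn fluctuation theory by hand (including the $\lambda=1/2$ logarithmic boundary), which the paper again avoids by the coupling to the tree case.
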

\end{tcolorbox}

Theorem \ref{thm:MainThm} shows that for all $k\ge 3$, there are three regimes of the
value of the mutation probability. In the low-rate-of-mutation regime
the proportion of red balls almost surely converges to one of two
numbers, both different from $1/2$.
Moreover, the limiting proportion is positively correlated with the
initial value. In the intermediate phase, the vertex colors are
asymptotically balanced, but there is enough signal for the majority
vote to perform strictly better than random guessing. Finally, in the
high-rate-of-mutation regime, the majority vote is equivalent to a
coin toss, at least asymptotically.

Note that for $k=1$, $\alpha_1=1$, so
$1/2-1/(2\alpha_1)=0$, and therefore 
the low-rate-of-mutation regime does not exist. Of course, this is in
accordance with the results of \cite{AdDeLuVe22} cited above.

On the other hand, for $k=3$ the two thresholds are
$1/2-1/(2\alpha_3)= 1/6$ and
$1/2-1/(4\alpha_1) =1/3$, meaning that
from $k=3$ onward the three different regimes can be observed. For
large $k$, both threshold values are of the order $1/2-\Theta(1/\sqrt{k})$.

A closely related model has been studied by \citet{Makur_2020}. They study
different random {\sc dags}, where important parameters are the number of
vertices at distance $k$ from the root and the indegree of vertices. They
also suppose that the position of the root
vertex is known. Two rules of root bit estimation are
studied: a noisy majority rule and the NAND rule. \citet{Makur_2020}
show that if the number of
vertices of depth $k$ is $\Omega\left(\log(k)\right)$ then there is a
threshold on the mutation probability for which root bit estimation is
possible.

As a first step, we study the convergence of the proportion of red
balls. To this end, it suffices to study the generalized urn process
defined above.  We mention here that \citet{CRIMALDI2022270} study a somewhat related urn
process, though with linear replacement rules.


In order to avoid unessential complications caused by breaking
  ties, we only consider odd values of $k$.
  The same techniques allow one to analyze even values of $k$. In such
  cases, in the event of a tie among the $k$ observed colors, one may
  choose the color of the new vertex at random.

\section{Different regimes}\label{sec:DifRegime}

We start by studying the evolution of $R_n$. Let us denote by $c_n$
the color of the $n$-th vertex appearing in the graph.
After possible mutation, each edge
connecting vertex $n+1$ to an older vertex carries a signal. This
signal is red with probability

$$ f(R_n):=(1-p)R_n+p(1-R_n)=(1-2p)R_n+p~. $$
Because the $k$ parents are chosen independently and that the color is chosen by the majority,

\begin{equation}
\PROB\left\{  c_{n+1}=R  \right\} \ = \ \PROB\left\{ \text{Bin}\left(k,f(R_n)\right)\geq k/2 \right\}~,
\end{equation}
where, conditionally on $R_n$, $\text{Bin}\left(k,f(R_n)\right)$ is a binomial random variable. Moreover, we know that the number of red vertices evolves as $(n+1)R_{n+1}=nR_n+\IND(c_{n+1}=R)$, where $\IND$ is the indicator function. We rewrite this as

\begin{equation}\label{ratio evolution}
R_{n+1}=R_n+\frac{\IND(c_{n+1}=R)-R_n}{n+1}~.
\end{equation}
A key to understanding $R_n$ is then to study the random variable
$\IND(c_{n+1}=R)-R_n$. We define, for $t\in [0,1]$,
\begin{equation}\label{function g}
g(t) \ := \ \E\left[ \IND(c_{n+1}=R)-R_n | R_n=t  \right] \ = \PROB\left\{\text{Bin}(k,f(t))>k/2\right\}-t~. 
\end{equation}
The evolution of $R_n$ is entirely determined by the function $g$.
Observe first that for any $t\in[0,1]$, $f(1-t)=1-f(t)$. Also, since $k$ is odd,
 
 $$ \PROB\left\{\text{Bin}(k,1-f(t))>k/2\right\}=1-\PROB\left\{\text{Bin}(k,f(t))>k/2\right\}~,$$
 which implies that 
$$ g(1-t)=-g(t)~. $$
The extremal values of $g$ are
 
$$g(0)=\PROB\left\{\text{Bin}(k,p)>k/2\right\}>0~,$$
and
$$g(1)=\PROB\left\{\text{Bin}(k,1-p)>k/2\right\}-1<0~.$$
Since $g$ is continuous, the polynomial $g$ has at least one root. From the symmetry property we have $g(1/2)=-g(1-1/2)=-g(1/2)$, so $g(1/2)=0$. Moreover we obtain

$$ g'(1/2)=\frac{1-2p}{2^{k-2}}\sum_{i>k/2}^k \binom{k}{i}(i-k/2)-1~.$$
Recalling the definition of $\alpha_k$ from \eqref{eq:defalpha},
we have $g'(1/2)=(1-2p)\alpha_k-1$. Since $\alpha_k\geq1$, we conclude:

\begin{equation*} 
  g'\left( \frac{1}{2}\right) \  \begin{cases} 
\  <0  & \text{if } p> \frac{1}{2}-\frac{1}{2\alpha_k}~, \\
 \\
\  >0 & \text{if } p< \frac{1}{2}-\frac{1}{2\alpha_k}~.
\end{cases}
\end{equation*}
To understand the other potential zeros of $g$, let us study its
convexity.

\begin{lemma}
The function $g$ is strictly convex on $(0,1/2)$ and strictly concave on $(1/2,1)$.
\end{lemma}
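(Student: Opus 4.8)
The goal is to show $g$ is strictly convex on $(0,1/2)$ and strictly concave on $(1/2,1)$, where
$$g(t) = \PROB\left\{\text{Bin}(k,f(t))>k/2\right\} - t, \qquad f(t)=(1-2p)t+p.$$
Let me think about how I'd attack this.
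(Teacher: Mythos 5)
There is no proof here to evaluate: your proposal restates the definition of $g$ and then stops at ``Let me think about how I'd attack this.'' No decomposition, computation, or key inequality is ever supplied, so the claim of convexity/concavity is entirely unsupported.

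To close the gap you need some handle on the second derivative (or equivalently the monotonicity of $g'$). The paper's route is to use the identity
\[
\PROB\left\{ \text{Bin}(k,x)\geq \tfrac{k+1}{2} \right\}
=\PROB\left\{ \text{Beta}\left(\tfrac{k+1}{2},\tfrac{k+1}{2}\right)<x \right\},
\]
which turns $g$ into an explicit integral,
\[
g(t)=\int_0^{f(t)}\bigl(x(1-x)\bigr)^{\frac{k-1}{2}}\frac{\Gamma(k+1)}{\Gamma^2\left(\frac{k+1}{2}\right)}\,dx \;-\;t,
\]
so that
\[
g'(t)=(1-2p)\bigl(f(t)(1-f(t))\bigr)^{\frac{k-1}{2}}\frac{\Gamma(k+1)}{\Gamma^2\left(\frac{k+1}{2}\right)}-1.
\]
Since $f(t)(1-f(t))=-(1-2p)^2 t(t-1)+p(1-p)$ is strictly increasing on $(0,1/2)$ and strictly decreasing on $(1/2,1)$, $g'$ is increasing then decreasing, which is exactly strict convexity on $(0,1/2)$ and strict concavity on $(1/2,1)$. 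Without this (or some equivalent computation of the sign of $g''$), the statement is not proved. Note also that a direct attack by differentiating the binomial sum twice is possible but messier; the beta representation is what makes the monotonicity of $g'$ transparent.
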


\begin{proof}
We may use the elementary identities

\begin{equation}\label{BetaEmbedding}
\PROB\left\{ \text{Bin}(k,x)\geq \frac{k+1}{2} \right\}=\PROB\left\{ \text{Beta}\left(\frac{k+1}{2},\frac{k+1}{2}\right)<x \right\}~,
\end{equation}
where Beta$(a,b)$ is a beta$(a,b)$ random variable. Hence,

\begin{equation*}
g(t)=\int_0^{f(t)}\left(x(1-x)\right)^{\frac{k-1}{2}}\frac{\Gamma(k+1)}{\Gamma^2\left(\frac{k+1}{2}\right)}dx \ \ -t~,
\end{equation*}
and therefore

\begin{equation}\label{eq:DerivativeOfG}
g'(t)=(1-2p)\left(f(t)(1-f(t))\right)^{\frac{k-1}{2}}\frac{\Gamma(k+1)}{\Gamma^2\left(\frac{k+1}{2}\right)}-1~.
\end{equation}
Since $f(t)(1-f(t))=-(1-2p)t(t-1)+p(1-p)$ is increasing for
$t\in(0,1/2)$ and decreasing for $t\in(1/2,1)$, $g$ is strictly convex
on $(0,1/2)$ and strictly concave on $(1/2,1)$.

\end{proof}

In summary, if $ p> \frac{1}{2}-\frac{1}{2\alpha_k}$, then
$g'(1/2)<0$, and thus $g$ is monotonically decreasing on $[0,1]$ and
has only one zero in $[0,1]$. If $g'(1/2)=0$, then there is only one
zero (at $1/2$) and $g$ exhibits an inflection point at $1/2$. If
$ p< \frac{1}{2}-\frac{1}{2\alpha_k}$, then $g'(1/2)>0$ and thus $g$
has exactly one zero in $(0,1/2)$ and by symmetry, it also has one
zero on $(1/2,1)$. We denote these zeros by $\beta_1$ and $\beta_2$,
respectively.

Figure \ref{fig:g} shows two examples of the graph of the function $g$.

\begin{figure}[H]
\begin{center}
\includegraphics[scale=0.6]{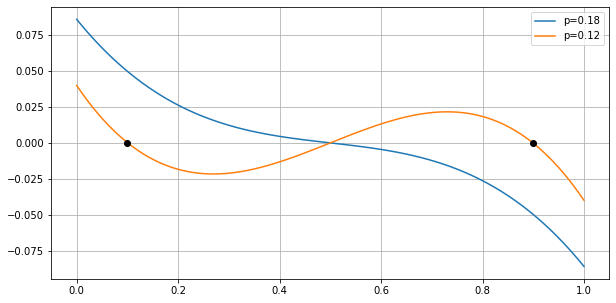}
\end{center}
\caption{$g$ as a function of $t\in[0,1]$, for $k=3$, with the choices
  $p=0.18>1/6$ and $p=0.12<1/6$.}
\label{fig:g}
\end{figure}

It is also interesting to know the position of $\beta_1$ (recall that
$\beta_2=1-\beta_1$). First, we note that for fixed $k$, if $p$ tends
to the threshold $1-1/(2\alpha_k)$, then $\beta_1$ tends to $1/2$.
In the following lemma we study the case when $p$ is far enough from the threshold, that is,
when $p\leq \frac{1}{2}-\frac{C}{2\alpha_k}$, for a sufficiently large constant
$C$.

\begin{lemma}
Let $p\leq \frac{1}{2}-\frac{C}{2\alpha_k}$ for $C\geq \sqrt{\frac{8\log(2)}{\pi}}$. Then
$$ \beta_1 \leq \exp\left(-\frac{k(1-2p)^2}{8}\right)~. $$
\end{lemma}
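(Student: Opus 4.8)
The plan is to locate $\beta_1$ from above by exploiting the sign structure of $g$ that has already been established. In the regime $p<\frac12-\frac{1}{2\alpha_k}$ we have $g(0)>0$, the function $g$ is strictly convex on $(0,1/2)$, $g(1/2)=0$, and $g'(1/2)>0$; consequently $g$ is strictly positive on $(0,\beta_1)$ and strictly negative on $(\beta_1,1/2)$, so $\beta_1$ is the unique sign change of $g$ in $(0,1/2)$. Writing $u:=\exp(-k(1-2p)^2/8)$, it therefore suffices to prove $g(u)\le 0$: any $t\in(0,1/2)$ with $g(t)\le 0$ automatically satisfies $t\ge\beta_1$. (If $u\ge 1/2$ the bound is trivial since $\beta_1<1/2$, so I may assume $u<1/2$.)

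By definition $g(u)=\PROB\{\mathrm{Bin}(k,f(u))>k/2\}-u$, and since $f$ is increasing with $f(1/2)=1/2$, the assumption $u<1/2$ gives both $f(u)<1/2$ and the identity $\tfrac12-f(u)=(1-2p)(\tfrac12-u)$. A Chernoff--Hoeffding tail bound for the binomial then yields
\[
\PROB\{\mathrm{Bin}(k,f(u))>k/2\}\le \exp\!\left(-2k(1-2p)^2\Big(\tfrac12-u\Big)^2\right).
\]
Thus $g(u)\le 0$ follows once $\exp\!\big(-2k(1-2p)^2(\tfrac12-u)^2\big)\le u=\exp(-k(1-2p)^2/8)$, which after taking logarithms reduces to the clean scalar inequality $(\tfrac12-u)^2\ge \tfrac1{16}$, i.e.\ $u\le\tfrac14$. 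In this way the entire analytic content collapses to verifying that the hypothesis on $p$ forces the target value $u$ to be sufficiently small.

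It remains to convert the assumption $p\le\frac12-\frac{C}{2\alpha_k}$, equivalently $1-2p\ge C/\alpha_k$, into this smallness. Substituting into the definition gives $u\le\exp\!\big(-kC^2/(8\alpha_k^2)\big)$, so I would combine it with an upper estimate on $\alpha_k$ to make the exponent $k(1-2p)^2/8$ at least $\log 4$: either the crude $\alpha_k=2\,\EXP|\mathrm{Bin}(k,1/2)-k/2|\le 2\sqrt{k/4}=\sqrt{k}$, or the sharp $\alpha_k\sim\sqrt{2k/\pi}$ from \eqref{eq:AlphaAsymptotic}. The main obstacle is precisely this constant bookkeeping: the crude variance bound forces a larger admissible constant than the stated $C=\sqrt{8\log2/\pi}$, so to reach the sharp threshold one must either invoke the asymptotically tight value of $\alpha_k$ or replace the Hoeffding bound by the sharper relative-entropy (Bennett) tail estimate, in which the factor $p(1-p)\le 1/4$ appearing in the denominator of the exponent supplies the extra room. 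Everything else is a routine reduction once the self-consistency inequality $g(u)\le 0$ has been set up.
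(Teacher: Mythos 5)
Your argument is the paper's argument: both reduce the claim to finding a point $t^*$ in $(0,1/2)$ at which the Hoeffding upper bound $\exp\left(-2k(1-2p)^2(1/2-t)^2\right)-t$ on $g(t)$ is non-positive (the paper phrases this as ``$\beta_1$ is at most the first zero of any upper bound of $g$,'' you phrase it via the sign change of $g$ itself; these are equivalent given the convexity analysis), both take $t^*=\exp(-k(1-2p)^2/8)$, and both reduce the whole matter to the scalar condition $t^*\le 1/4$, i.e.\ $k(1-2p)^2\ge 16\log 2$. So the proposal is correct in structure and matches the paper step for step up to that point.

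The one place you stop short --- converting the hypothesis $p\le\frac12-\frac{C}{2\alpha_k}$ into $k(1-2p)^2\ge 16\log 2$ --- is exactly the step the paper dispatches in one line by citing \eqref{eq:AlphaAsymptotic}, and your suspicion about the constant bookkeeping is well founded rather than a defect of your write-up. To lower-bound $k(1-2p)^2\ge kC^2/\alpha_k^2$ one needs an \emph{upper} bound on $\alpha_k$, which the asymptotic equivalence $\alpha_k\sim\sqrt{2k/\pi}$ does not by itself supply (indeed $\alpha_1=1>\sqrt{2/\pi}$ and $\alpha_3=3/2>\sqrt{6/\pi}$, so $\alpha_k$ sits \emph{above} $\sqrt{2k/\pi}$ for small $k$); and even granting $\alpha_k^2= 2k/\pi$, the requirement $kC^2/\alpha_k^2=\pi C^2/2\ge 16\log 2$ forces $C\ge\sqrt{32\log 2/\pi}$, not the stated $\sqrt{8\log 2/\pi}$. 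Your proposed repairs (a sharper tail bound, or the exact value of $\alpha_k$) do not obviously recover the smaller constant either, since the relative-entropy improvement over Hoeffding is only higher order when $f(t)$ is close to $1/2$. In short: same proof as the paper, with the honest observation that the final constant-chasing step --- which the paper asserts without detail --- appears to need a larger admissible value of $C$ (or a restriction to $k$ large) to go through as written.
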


\begin{proof}
 $\beta_1$ is the smallest root of $g(t)$ and since $g(0)>0$, its
smallest root is smaller than the smallest root of any upper bound of
$g$. On the other hand, 
\[
  g(t) = \PROB\left\{ \text{Bin}(k,f(t))\geq \frac{k}{2} \right\} -t \leq
  \exp\left(
    -2k\left(\frac{1}{2}-f(t)\right)^2\right)=\exp\left(-2k(1-2p)^2\left(\frac{1}{2}-t\right)^2\right) -t~.
\]  
Thus, $\beta_1$ is at most the first zero of
$b(t):=\exp\left( c_1\left(\frac{1}{2}-t\right)^2\right)-t$, for
$c_1=2k(1-2p)^2$. Since $b(0)>0$, if for some $t^*$, $b(t^*)\leq0$
then the first zero of $b$ and therefore $\beta_1$ is at most
$t^*$. Taking $t^*=e^{-c_1/16}$, we have
\[
  b(t^*)\leq 0\iff \left(\frac{1}{2}-e^{-c_1/16}\right)^2\geq 1/16
  \iff c_1 \geq 32\log(2)~.
\]  
 From \eqref{eq:AlphaAsymptotic} and the expression of $c_1$, we have
 that by taking $C\geq \sqrt{\frac{8\log(2)}{\pi}}$, 
$$ 2k(1-2p)^2\geq 32\log(2)~. $$
This shows that for $p\leq \frac{1}{2}-\frac{C}{2\alpha_k}$, we have
$$ \beta_1 \leq \exp\left(-\frac{k(1-2p)^2}{8}\right)~. $$

\end{proof}

\section{Convergence of the proportion of red balls}
\label{sec:convergence}

In order to analyze the probability of error of the majority vote,
first we establish convergence properties of $R_n$.  The two possible
regimes of $g$ suggest that there are two distinct regimes of the
evolution of $R_n$. From \eqref{ratio evolution} we note that $R_n$
has a positive drift if $g(R_n)$ is positive, and a negative drift
otherwise. This suggests that in the high-rate-of-mutation regime,
$R_n$ converges to $1/2$ and in the low-rate-of-mutation regime it
converges to either $\beta_1$ or $\beta_2$. The following section
investigates this intuition, using Lemma 2.6 and Corollary 2.7 from
\citet{Pemantle_2007} about the convergence of reinforced random
processes. We state them here.

\begin{lemma}[\citet{Pemantle_2007}]\label{lem:PemantleNonVisit}
Let $\left\{ X_n; \ n\geq  0 \right\}$ be a stochastic process in $\R$ adapted to a filtration $\{ \mathcal{F}_n\}$. Suppose that $X_n$ satisfies 

$$ X_{n+1} - X_n =\frac{1}{n} \left( F(X_n)+\xi_{n+1}+E_n \right)~,  $$
where $F$ is a function on $\R$, $\E\left[ \xi_{n+1} \mid \mathcal{F}_n \right]=0$ and the remainder term $E_n$ goes to $0$ and satisfies $\sum_{n=1}^{\infty} n^{-1}|E_n| < \infty$ almost surely. Suppose that $F$ is bounded and that $\E\left[ \xi_{n+1}^2\mid \mathcal{F}_n \right] <K$ for some finite constant $K$. If for $a_0<x<b_0$, $F(x)\geq  \delta$ for some $\delta >0$, then for any $[a,b]\subset (a_0,b_0)$ the process $\{X_n\}$ visits $[a,b]$ finitely many times almost surely. The same result holds if $F(x)\leq -\delta$.
\end{lemma}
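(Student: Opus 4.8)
The plan is to convert the statement into a deterministic, per-realization argument by first taming the martingale noise and the remainder, and then exploiting the single mechanism that drives everything: since $\sum_n 1/n=\infty$, a drift that is bounded away from zero accumulates without bound, whereas the noise and remainder contributions stay bounded. First I would control the noise. Because $\E[\xi_{n+1}\mid\mathcal{F}_n]=0$ and $\E[\xi_{n+1}^2\mid\mathcal{F}_n]<K$, the partial sums $M_n:=\sum_{j=1}^{n-1}\xi_{j+1}/j$ form an $L^2$-bounded martingale, since $\sum_j \E[\xi_{j+1}^2\mid\mathcal{F}_j]/j^2\le K\sum_j j^{-2}<\infty$. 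By the martingale convergence theorem $M_n$ converges almost surely to a finite limit, so for almost every realization and any $\eta>0$ there is a finite random time $T$ after which $\sup_{m>n\ge T}\bigl|\sum_{j=n}^{m-1}\xi_{j+1}/j\bigr|\le\eta$, while the hypothesis $\sum_n |E_n|/n<\infty$ gives $\sum_{j\ge T}|E_j|/j\le\eta$; enlarging $T$ if needed, the single-step increments $(|F(X_j)|+|\xi_{j+1}|+|E_j|)/j$ become so small that the process cannot jump across $(a_0,b_0)$ in one step. Thus the noise-plus-remainder displacement over any sub-interval of $[T,\infty)$ is confined to a band of width at most $2\eta$.

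Next I would establish escape. For $n\ge T$ I decompose $X_n=X_T+\sum_{j=T}^{n-1}F(X_j)/j+W_n$, where $W_n$ collects the noise and remainder and oscillates by at most $2\eta$. Consider any maximal excursion during which the process stays in $(a_0,b_0)$; there $F(X_j)\ge\delta$, so the accumulated drift grows like $\delta\sum 1/j$ and diverges. Since $X_n$ is trapped in a bounded interval during the excursion while unbounded drift would force $X_n\to+\infty$, each excursion has finite length. The decisive observation is a level comparison: if at some time $l$ in the excursion $X_l\ge a$, then for every later time $l'$ of the same excursion $X_{l'}\ge X_l-2\eta\ge a-2\eta>a_0$ (choosing $2\eta<a-a_0$), so the excursion cannot exit downward through $a_0$ and must exit upward above $b_0$. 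Equivalently, any excursion that exits below never climbs to height $a$, hence never meets $[a,b]$.

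It then remains to rule out infinitely many returns. Once an excursion exits above $b_0$, the process sits at height $\ge b_0$; to meet $[a,b]$ again it must re-enter $(a_0,b_0)$ from above, but then it enters within $\eta$ of $b_0$ (single steps being small after $T$), and since the ensuing drift is upward while the noise moves it by at most $2\eta$, it stays above $b$ throughout the new excursion (choosing $\eta$ small relative to $b_0-b$), never reaching $[a,b]$. Moreover, after an above-exit the process can never fall below $a_0$, since that would require a downward exit, already excluded. Consequently $[a,b]$ can be met only during the excursion in progress at time $T$ and during at most one later excursion, each of finite length; together with the finitely many steps before $T$ this gives finitely many visits almost surely. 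The case $F(x)\le-\delta$ follows by applying the result to $-X_n$.

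The step I expect to be the main obstacle is the no-return argument, which hinges on combining the two controls cleanly: one must first pass through almost-sure martingale convergence—precisely because $\xi_{n+1}$ is controlled only in conditional $L^2$ and is not bounded pointwise—to guarantee that, on each realization, the tail fluctuations and the individual jumps are simultaneously small, and then one must choose $\eta$ small relative to both gaps $a-a_0$ and $b_0-b$ so that the level-comparison bookkeeping forbids an excursion from descending back into $[a,b]$. Getting the interplay between the $2\eta$ noise band, the sign-definite drift on $(a_0,b_0)$, and the unknown drift outside $(a_0,b_0)$ exactly right is the delicate point; the escape itself is comparatively immediate from the divergence of the harmonic series.
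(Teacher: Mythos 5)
The paper offers no proof of this lemma: it is imported verbatim from Pemantle's survey (Lemma~2.6 there), so there is no in-paper argument to compare against. Your proof is correct and is essentially the standard one for this result: $L^2$-bounded martingale convergence controls the tail of $\sum_j \xi_{j+1}/j$, the divergence of $\sum_j 1/j$ forces every excursion inside $(a_0,b_0)$ to terminate, and the $2\eta$-band bookkeeping shows that an excursion reaching level $a$ must exit upward and that every re-entry from above stays above $b$; your final count (visits confined to the excursion in progress at time $T$ plus at most one later, upward-exiting excursion) checks out. The one step you leave implicit is why the individual increments $|\xi_{j+1}|/j$ tend to $0$ almost surely — pointwise boundedness of $\xi_{n+1}$ is not assumed — but this follows in one line from the almost-sure convergence of $M_n$, whose successive differences are exactly $\xi_{j+1}/j$, so it is worth stating explicitly when you enlarge $T$ to make single steps small.
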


\begin{corollary}[\citet{Pemantle_2007}]\label{cor:PemantleConvergence}
If $F$ is continuous on $\R$, then $X_n$ converges almost surely to the zero set of $F$.
\end{corollary}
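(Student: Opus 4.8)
The plan is to deduce the corollary from Lemma \ref{lem:PemantleNonVisit} by a covering argument, showing that the process is eventually expelled from every region on which $F$ stays bounded away from zero. Write $Z = \{x \in \R : F(x) = 0\}$ for the zero set. Since $F$ is continuous, $Z$ is closed and its complement $Z^c$ is open, hence a countable disjoint union of open intervals; on each such interval $F$ cannot change sign, because a sign change would force a zero in the interior. On any compact subinterval $[a,b]$ strictly contained in one component $(a_0,b_0)$ of $Z^c$, continuity together with compactness gives $|F| \ge \delta$ for some $\delta > 0$, with $F$ of a single sign. Lemma \ref{lem:PemantleNonVisit} (in its $F \ge \delta$ form or its $F \le -\delta$ form, according to that sign) then asserts that $\{X_n\}$ visits $[a,b]$ only finitely often, almost surely.

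Next I would turn this local statement into convergence to $Z$. Fix $\epsilon > 0$ and consider $K_\epsilon = \{x : \mathrm{dist}(x,Z) \ge \epsilon\}$. Since the process of interest takes values in the bounded interval $[0,1]$ (here $X_n = R_n$), it suffices to control $K_\epsilon \cap [0,1]$, which is compact and disjoint from $Z$. Each point of this set lies in some component of $Z^c$, and around it one can choose a compact subinterval of that component on which $F$ is bounded away from zero; by compactness finitely many such subintervals cover $K_\epsilon \cap [0,1]$. By the previous paragraph each is visited finitely often almost surely, and a finite union of probability-one events again has probability one, so $\{X_n\}$ enters $K_\epsilon \cap [0,1]$ only finitely often. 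Consequently $\limsup_{n} \mathrm{dist}(X_n, Z) \le \epsilon$ almost surely.

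Finally, applying this for $\epsilon = 1/m$ and intersecting over $m \in \N$, a countable intersection of probability-one events, yields $\mathrm{dist}(X_n, Z) \to 0$ almost surely, which is the claimed convergence to the zero set (note this gives convergence to the set, not necessarily to a single point). The main obstacle is the covering step: I must ensure that the region bounded away from $Z$ decomposes into finitely many intervals on each of which $F$ has a definite sign and is uniformly bounded away from zero, so that the quantitative version of Lemma \ref{lem:PemantleNonVisit} can be applied. This is exactly where boundedness of the state space, and hence compactness of $K_\epsilon$, is used; without it one would need an additional confinement or tightness argument to rule out escape to infinity. A secondary point worth recording is that Lemma \ref{lem:PemantleNonVisit} implicitly controls the increments $X_{n+1} - X_n$, so that the process cannot leap across a drift region and re-enter $[a,b]$ infinitely often; this is what makes ``finitely many visits'' genuinely available in both drift directions.
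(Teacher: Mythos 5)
The paper states this corollary as a quoted result from \citet{Pemantle_2007} and gives no proof of its own, so there is no internal argument to compare against; your derivation from Lemma \ref{lem:PemantleNonVisit} is the standard covering argument by which Pemantle's Corollary 2.7 follows from his Lemma 2.6, and it is correct. Two small points are worth recording. First, Lemma \ref{lem:PemantleNonVisit} assumes $F\geq \delta$ (or $F\leq -\delta$) on an \emph{open} interval $(a_0,b_0)$ and controls visits to compact $[a,b]\subset(a_0,b_0)$; so around each point of $K_\epsilon\cap[0,1]$ you should pick the compact interval $[a,b]$ together with a slightly larger compact interval still inside the relevant component of $Z^c$, obtain the uniform bound $|F|\geq\delta$ there by continuity and compactness, and apply the lemma with the interior of that larger interval in the role of $(a_0,b_0)$. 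This is immediate because the components of $Z^c$ are open, but the quantifiers must be arranged in that order. Second, your caveat about boundedness is exactly right: as literally stated for a process on all of $\R$ the conclusion can fail (the process could escape to infinity), and the corollary is only ever applied in the paper to $R_n\in[0,1]$, where compactness of $K_\epsilon\cap[0,1]$ closes the gap. Your final remark about increments of order $1/n$ preventing the process from leaping across $[a,b]$ is not actually needed for this corollary --- finitely many visits to $K_\epsilon$ already yields $\limsup_n \mathrm{dist}(X_n,Z)\leq\epsilon$ --- but it is precisely the ingredient the paper later uses to show that $R_n$ cannot oscillate between $\beta_1$ and $\beta_2$.
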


\subsection{The high-rate-of-mutation regime $\left( \frac{1}{2}-\frac{1}{2\alpha_k}\leq p\leq\frac{1}{2}\right)$}

Rewrite (\ref{ratio evolution}) as
\begin{equation*}
\begin{aligned}
R_{n+1}-R_n=& \frac{1}{n+1}\left( \PROB\left\{ \text{Bin}(k,f\left(R_n)\right)\geq \frac{k}{2} \right\}-R_n \right)\\
&+\frac{1}{n+1}\left(\IND(c_{n+1}=R)- \PROB\left\{ \text{Bin}(k,f\left(R_n)\right)\geq \frac{k}{2} \right\}\right)~.
\end{aligned}
\end{equation*} 
Since $g(R_n)=\PROB\left\{ \text{Bin}(k,f\left(R_n)\right)\geq k/2 \right\}-R_n$, we see that

\begin{equation}\label{pemantle ratio evolution}
R_{n+1}-R_n=\frac{g(R_n)+\xi_{n+1}}{n+1}~,
\end{equation}
where $ \xi_{n+1}= \IND(c_{n+1}=R)- \PROB\left\{ \text{Bin}(k,f\left(R_n)\right)\geq k/2 \right\}$. Because $g$ is continuous and $\E\left[\xi_{n+1}|R_n\right]=0$, our process satisfies all the requirements for Corollary \ref{cor:PemantleConvergence}. It states that $R_n$ converges almost surely to the set of zeros of $g$. In this regime, this implies that $R_n$ converges to $1/2$ almost surely.

\subsection{The low-rate-of-mutation regime $\left( 0<p<\frac{1}{2}-\frac{1}{2\alpha_k}\right)$}\label{sec:LowRateRegime}

In this regime, the requirements of Corollary \ref{cor:PemantleConvergence} are still met. So $R_n$ converges almost surely to the set of zeros of $g$, which is $\{\beta_1,1/2,\beta_2\}$. We first show that $R_n$ does not converge to $1/2$: $1/2$ seems to be an unstable equilibrium point, since the drift in the process has a tendency to pull $R_n$ away from $1/2$. We state Theorem 2.9 from \citet{Pemantle_2007} here:

\begin{theorem}[\citet{Pemantle_2007}]\label{thm:PemantleNonConvergence}

  Suppose $\{X_n\}$ satisfies the conditions of Lemma
  \ref{lem:PemantleNonVisit} and that for some $w \in (0, 1)$ and
  $\epsilon > 0$, $\text{sign}F(x)=\text{sign}(x-w)$ for all
  $x\in (w-\epsilon,w+\epsilon)$. For
  $\xi_{n+1}^+=\max(\xi_{n+1},0)$ and
  $\xi_{n+1}^-= \max(-\xi_{n+1},0)$, suppose that
  $\E[\xi_{n+1}^+\mid \mathcal{F}_n]$ and
  $\E[\xi_{n+1}^-\mid \mathcal{F}_n]$ are bounded above
  and below by positive numbers when
  $X_n \in (w - \epsilon, w + \epsilon)$. Then
  $\PROB\{ X_n \to w \}=0.$
\end{theorem}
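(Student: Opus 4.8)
The plan is to argue by contradiction, using that $w$ is a \emph{repelling} equilibrium while the noise never dies out. Indeed, the hypothesis $\text{sign}\,F(x)=\text{sign}(x-w)$ on $(w-\epsilon,w+\epsilon)$ says that whenever $X_n$ lies in this interval the drift pushes it \emph{away} from $w$, whereas the two-sided lower bounds on $\E[\xi_{n+1}^+\mid\mathcal{F}_n]$ and $\E[\xi_{n+1}^-\mid\mathcal{F}_n]$ guarantee that each step injects a genuine random kick of either sign. First I would reduce to a fixed neighborhood: on $\{X_n\to w\}$ there is a.s. a finite random time after which $X_n$ stays in $(w-\epsilon,w+\epsilon)$, so
\[
\{X_n\to w\}=\bigcup_{n_0}\Bigl(\{X_n\to w\}\cap B_{n_0}\Bigr),\qquad B_{n_0}:=\{X_m\in(w-\epsilon,w+\epsilon)\ \forall m\ge n_0\},
\]
and it suffices to prove $\PROB(\{X_n\to w\}\cap B_{n_0})=0$ for each fixed $n_0$. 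On $B_{n_0}$ the sign hypothesis holds at every step.

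Next I would linearise around $w$ by factoring out the drift. Writing $Y_n:=X_n-w$, the recursion becomes $Y_{n+1}=r_nY_n+\frac{\xi_{n+1}+E_n}{n+1}$ with $r_n:=1+\frac{1}{n+1}\frac{F(X_n)}{Y_n}$, and on $B_{n_0}$ the sign hypothesis forces $F(X_n)/Y_n>0$, hence $r_n\ge 1$. Setting $\pi_n:=\prod_{k=n_0}^{n-1}r_k\ge 1$ and $Z_n:=Y_n/\pi_n$, the discounted process satisfies $Z_{n+1}=Z_n+\frac{\xi_{n+1}+E_n}{(n+1)\pi_{n+1}}$. The pure-noise part is an $L^2$-bounded martingale (its increments have conditional variance $O(1/n^2)$ since $\pi_{n+1}\ge1$ and, by the conditions of Lemma \ref{lem:PemantleNonVisit}, $\E[\xi_{n+1}^2\mid\mathcal{F}_n]<K$), and the $E_n$ part converges absolutely because $\sum_n|E_n|/n<\infty$; hence $Z_n\to Z_\infty$ a.s. Since $\pi_n\ge 1$, on $\{X_n\to w\}\cap B_{n_0}$ we have $|Z_n|\le|Y_n|\to 0$, so necessarily $Z_\infty=0$. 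Thus $\{X_n\to w\}\cap B_{n_0}\subseteq\{Z_\infty=0\}$, and it remains to show this event is null.

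The final step is to prove $\PROB\{Z_\infty=0\}=0$, which is exactly where the nondegeneracy of the noise enters. Heuristically, $Z_\infty=Z_{n_0}+\sum_{k\ge n_0}\frac{\xi_{k+1}}{(k+1)\pi_{k+1}}+(\text{remainder})$ depends, for each $k$, \emph{affinely} on the kick $\xi_{k+1}$ through a nonzero coefficient; since $\xi_{k+1}$ takes values of both signs with conditional probability bounded below, no single value can be forced, and accumulating this over infinitely many kicks should pin the probability of the exact cancellation $\{Z_\infty=0\}$ to $0$. Combined with the previous step this yields $\PROB(\{X_n\to w\}\cap B_{n_0})=0$ for every $n_0$, and hence $\PROB\{X_n\to w\}=0$.

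The main obstacle is making this last step rigorous, because the discounting coefficients $1/((k+1)\pi_{k+1})$ and the remainder $E_k$ themselves depend on the earlier kicks, so $Z_\infty$ is not a clean affine function of independent increments and the ``one bad value per kick'' heuristic is not immediate. I would handle it with a conditioning/martingale argument: by L\'evy's zero-one law, $\PROB\{Z_\infty=0\mid\mathcal{F}_k\}\to\IND\{Z_\infty=0\}$ a.s., while the fresh kick $\xi_{k+1}$ should keep $\PROB\{Z_\infty=0\mid\mathcal{F}_k\}$ bounded away from $1$ for infinitely many $k$ on $\{Z_\infty=0\}$, a contradiction unless that event is null. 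It is precisely the uniform two-sided lower bound on the conditional noise, valid throughout $(w-\epsilon,w+\epsilon)$, that supplies this separation.
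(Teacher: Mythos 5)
First, a remark on scope: the paper does not prove this statement at all --- it is quoted verbatim as Theorem~2.9 of \citet{Pemantle_2007}, whose proof goes back to \citet{pemantle1990nonconvergence} and is a quantitative escape-probability argument. So the comparison is with that argument, and your proposal does not supply the ingredient it is built on.

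Your reduction to the events $B_{n_0}$ and the discounted process $Z_n=Y_n/\pi_n$ is reasonable (modulo stopping the process on exit from the interval so that $\pi_{n+1}\ge 1$ is actually available, and setting $r_n=1$ when $Y_n=0$), and $Z_n$ does converge a.s.\ on $B_{n_0}$. The genuine gap is the last step, $\PROB\{Z_\infty=0\}=0$, and the heuristic offered for it is not merely incomplete but misdirected. On $B_{n_0}$ (after shrinking $\epsilon$ slightly) the sign hypothesis makes $w$ the only zero of $F$ in the interval, so by Corollary~\ref{cor:PemantleConvergence} the event $\{X_n\to w\}\cap B_{n_0}$ coincides with $B_{n_0}$ up to null sets; and since $\pi_n\ge 1$ and $|Y_n|\to 0$ there, $Z_\infty=0$ holds \emph{automatically} on all of $B_{n_0}$. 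Hence $\{Z_\infty=0\}$ is not a thin ``exact cancellation'' event that each fresh kick can dodge: it contains $B_{n_0}$, i.e.\ it has probability at least the very quantity you are trying to prove is zero. The adaptive coefficients $1/((k+1)\pi_{k+1})$ absorb whatever the kicks do (if the noise pushes $Y_n$ outward, $\pi_n$ grows and $Z_n$ still dies), so no anti-concentration or L\'evy zero--one law contradiction can be extracted this way. What is actually required --- and what Pemantle's proof delivers --- is a uniform $\delta>0$, independent of $n$ and of $\mathcal{F}_n$, lower-bounding the conditional probability of never converging to $w$ from any state in the neighborhood at time $n$. This is obtained by showing that the noise accumulated over a time block $[n,Cn]$ has conditional spread of order $n^{-1/2}$ in each direction (here the two-sided bounds on $\E[\xi^{\pm}_{n+1}\mid\mathcal{F}_n]$ are used quantitatively), so the process reaches distance $an^{-1/2}$ from $w$ with probability bounded below, after which an optional-stopping/martingale argument exploiting the outward drift prevents return with probability bounded below; only then does L\'evy's zero--one law yield $\PROB\{X_n\to w\}=0$. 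Your proposal contains no estimate of this kind, so the central difficulty of the theorem is untouched.
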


\begin{corollary}
In the low-rate-of-mutation regime, almost surely the process $R_n$ does not converge to $\frac{1}{2}$.
\end{corollary}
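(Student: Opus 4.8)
The plan is to apply Theorem \ref{thm:PemantleNonConvergence} with $w=1/2$ to the process $R_n$ in its form \eqref{pemantle ratio evolution}. We already have the representation
\[
R_{n+1}-R_n=\frac{g(R_n)+\xi_{n+1}}{n+1}~,
\]
with $F=g$, $E_n=0$, and $\E[\xi_{n+1}\mid R_n]=0$. The conditions of Lemma \ref{lem:PemantleNonVisit} (boundedness of $g$, bounded conditional second moment of $\xi_{n+1}$) hold since $g$ is a bounded continuous function on $[0,1]$ and $\xi_{n+1}$ is bounded by construction. It therefore remains to verify the two hypotheses specific to Theorem \ref{thm:PemantleNonConvergence}: the sign condition on $g$ near $1/2$, and the two-sided positivity of $\E[\xi_{n+1}^+\mid \mathcal{F}_n]$ and $\E[\xi_{n+1}^-\mid \mathcal{F}_n]$.

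First I would establish the sign condition. In the low-rate-of-mutation regime we have $g'(1/2)=(1-2p)\alpha_k-1>0$, and by the convexity/concavity lemma $g$ is strictly convex on $(0,1/2)$ and strictly concave on $(1/2,1)$ with $\beta_1<1/2<\beta_2$ its only other zeros. Hence there is an $\epsilon>0$ (any $\epsilon<\min(1/2-\beta_1,\beta_2-1/2)$ works) such that $g(x)<0$ on $(1/2-\epsilon,1/2)$ and $g(x)>0$ on $(1/2,1/2+\epsilon)$, i.e. $\mathrm{sign}\,g(x)=\mathrm{sign}(x-1/2)$ on $(1/2-\epsilon,1/2+\epsilon)$, exactly as required.

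Next I would check the noise condition. Recall $\xi_{n+1}=\IND(c_{n+1}=R)-q(R_n)$ where $q(t):=\PROB\{\mathrm{Bin}(k,f(t))\geq k/2\}$. Since $\IND(c_{n+1}=R)$ is a Bernoulli random variable with conditional mean $q(R_n)$, we get $\E[\xi_{n+1}^+\mid \mathcal{F}_n]=q(R_n)(1-q(R_n))=\E[\xi_{n+1}^-\mid \mathcal{F}_n]$. Thus both quantities equal $q(R_n)(1-q(R_n))$, and it suffices to bound $q(R_n)$ away from $0$ and $1$ uniformly for $R_n\in(1/2-\epsilon,1/2+\epsilon)$. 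As $f$ maps this interval into a compact subinterval of $(0,1)$ on which $q$ is continuous with values strictly between $0$ and $1$ (for $0<p<1/2$, the signal probability $f(t)$ stays strictly inside $(0,1)$, so neither a unanimous-red nor unanimous-blue outcome has probability $0$ or $1$), the product $q(1-q)$ is bounded below and above by positive constants on that interval.

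With both hypotheses verified, Theorem \ref{thm:PemantleNonConvergence} yields $\PROB\{R_n\to 1/2\}=0$, which is the claim. I expect the only mildly delicate point to be the uniform positivity of $q(R_n)(1-q(R_n))$ — one must confirm that for $p\in(0,1/2)$ the majority vote is genuinely random on a neighborhood of $R_n=1/2$, so that $\xi_{n+1}$ has nondegenerate positive and negative parts; this follows from $f$ staying in the open interval $(0,1)$, but it is the one spot where the strictness $p>0$ is actually used.

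\endproof
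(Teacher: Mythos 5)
Your proposal is correct and follows essentially the same route as the paper: both apply Theorem \ref{thm:PemantleNonConvergence} at $w=1/2$, verify the sign condition via $g'(1/2)>0$, and check the nondegeneracy of $\xi_{n+1}^{\pm}$ near $R_n=1/2$ (the paper does this by a continuity argument, you by the explicit and equivalent computation $\E[\xi_{n+1}^{\pm}\mid\mathcal{F}_n]=q(R_n)(1-q(R_n))$). The only nitpick is your closing remark: since $f$ maps a neighborhood of $1/2$ into a neighborhood of $1/2$ for every $p\in[0,1/2]$, the uniform positivity of $q(1-q)$ there does not actually require $p>0$.
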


\begin{proof}
Since the conditional distribution of $\xi_{n+1}$, given $R_n=1/2$ does not depend on $n$,
it is immediate that
$$ c<\E[\xi^+_{n+1}|R_n=1/2]<1~,$$
and
$$ c<\E[\xi^-_{n+1}|R_n=1/2]<1~,$$
for some $c>0$ that does not depend on $n$. Since $t\mapsto \E[\xi^{\pm}_{n+1}|R_n=t]$ is continuous and does not depend on $n$, there exists $\epsilon>0$ such that 
for all $t \in \left( 1/2-\epsilon, 1/2+\epsilon\right)$,
$$ \frac{c}{2}<\E[\xi^{\pm}_{n+1}|R_n=t]<2~. $$
Moreover, $g$ is negative on $(1/2-\epsilon,1/2)$ and positive on $(1/2,1/2+\epsilon)$. So, by Theorem \ref{thm:PemantleNonConvergence},
$$ \PROB\left\{ R_n\mapsto \frac{1}{2}\right\} =0~. $$
\end{proof}

\begin{corollary}
In the low-rate-of-mutation regime, the process $R_n$ converges almost surely, either to $\beta_1$ or to $\beta_2$, that is,
$$ \PROB\left\{ R_n\to \beta_1\right\}+\PROB\left\{ R_n\to \beta_2\right\}=1~. $$
\end{corollary}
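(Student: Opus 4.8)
The plan is to combine the two facts already established for this regime. By Corollary \ref{cor:PemantleConvergence}, $R_n$ converges almost surely to the zero set of $g$, which here is the finite set $S:=\{\beta_1,1/2,\beta_2\}$; and by the previous corollary, $\PROB\{R_n\to 1/2\}=0$. The only genuine gap to fill is that convergence to a finite set does not, by itself, forbid the process from oscillating indefinitely among the three points of $S$. The main (though mild) obstacle is therefore to upgrade this set-convergence to convergence toward a single point, and I would do this by exploiting the fact that the increments of $R_n$ vanish.

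First I would read off from \eqref{pemantle ratio evolution} that
\[
  |R_{n+1}-R_n| \ \leq\ \frac{|g(R_n)|+|\xi_{n+1}|}{n+1}\ \leq\ \frac{2}{n+1}~,
\]
using that $g$ takes values in $[-1,1]$ on $[0,1]$ and that $\xi_{n+1}\in[-1,1]$ by its definition as an indicator minus a probability. In particular $|R_{n+1}-R_n|\to 0$ almost surely, so the process makes arbitrarily small steps for large $n$.

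Next I would run a trapping argument. The three points $\beta_1<1/2<\beta_2$ are isolated, separated by a positive gap $d$ (indeed $d=1/2-\beta_1$ by the symmetry $\beta_2=1-\beta_1$). Work on the almost-sure event that $\mathrm{dist}(R_n,S)\to 0$. There is a random time $N$ after which $\mathrm{dist}(R_n,S)<d/4$ and $|R_{n+1}-R_n|<d/4$ simultaneously. If at some $n\geq N$ the process lies within $d/4$ of a point $s\in S$, then $R_{n+1}$ lies within $d/2$ of $s$, hence is at distance at least $d/2$ from every other point of $S$; since $\mathrm{dist}(R_{n+1},S)<d/4$, the nearest point of $S$ to $R_{n+1}$ must again be $s$. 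By induction $R_m$ stays within $d/4$ of this single $s$ for all $m\geq n$, and combined with $\mathrm{dist}(R_m,S)\to 0$ this forces $R_m\to s$. Thus $R_n$ converges almost surely to one of $\beta_1$, $1/2$, or $\beta_2$.

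Finally, intersecting with the event $\{R_n\not\to 1/2\}$, which has probability one by the previous corollary, eliminates the limit $1/2$ and leaves
\[
  \PROB\left\{R_n\to\beta_1\right\}+\PROB\left\{R_n\to\beta_2\right\}=1~,
\]
as claimed. Everything beyond the trapping step is a direct citation of the preceding two results.
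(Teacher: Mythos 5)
Your proof is correct, and it reaches the conclusion by a genuinely different route from the paper at the one step that requires an argument. Both proofs share the same skeleton: almost-sure convergence to the zero set $S=\{\beta_1,1/2,\beta_2\}$ of $g$ via Corollary \ref{cor:PemantleConvergence}, and exclusion of the limit $1/2$ by the preceding corollary. Where you differ is in ruling out indefinite oscillation among the points of $S$. The paper invokes Lemma \ref{lem:PemantleNonVisit}: since $g\geq \delta$ on a subinterval $(a_0,a_1)$ of $(1/2,\beta_2)$, the process visits --- and hence, because its steps are $O(1/n)$, crosses --- that strip only finitely often, which forbids oscillation between $\beta_1$ and $\beta_2$. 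You instead run a purely deterministic trapping argument: once $\mathrm{dist}(R_n,S)<d/4$ and the step size is below $d/4$, the nearest point of $S$ to $R_n$ can never change, so set convergence upgrades to convergence to a single point, after which the previous corollary eliminates $1/2$. Your version is more elementary and slightly more general: it uses nothing about the sign of $g$ between its zeros, only that the zero set is finite with isolated points and that the increments vanish (indeed $|R_{n+1}-R_n|\leq 1/(n+1)$, even better than your bound of $2/(n+1)$), so it would apply verbatim to any such recursion with isolated equilibria. The paper's version instead reuses the drift information it has already exploited for the non-convergence-to-$1/2$ step. Both arguments are valid.
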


\begin{proof}
It suffices to check that $R_n$ converges to $\beta_1$ or $\beta_2$ and does not oscillate between them. Between $1/2$ and $\beta_2$ the function $g$ is positive, so there exists $1/2<a_0<a_1<\beta_2$ and $\delta>0$ such that 
for all $t \in (a_0,a_1)$,
$  g(t)>\delta~.  $

Lemma \ref{lem:PemantleNonVisit} shows that $R_n$ visits any set
$[a,b]\subset(a_0,a_1)$ finitely often almost surely. Because the step
sizes of $R_n$ are of order $1/n$, if $R_n$ visits $[a,b]$ finitely
many times, it crosses it finitely many times. Indeed, for
$n$ large enough it cannot cross $[a,b]$ without visiting $[a,b]$.
Since $R_n$ converges almost surely to the set
$\{\beta_1,\beta_2\}$, but $R_n$ crosses the set $(a_0,a_1)$ finitely
many times, we see that $R_n$ converges almost surely either to
$\beta_1$ or $\beta_2$, as claimed.
\end{proof}

\section{Is majority voting better than random guessing?}
\label{sec:majority}

As a first step of understanding if majority voting is better than random guessing, we prove the following lemma. It gives an equivalent condition to the success of majority voting in terms of the first time the majority flips.

\begin{lemma}\label{lemma majority}
Let $T$ denote the random time at which the majority flips for the first time, that is,
$$ T=\min\left\{ n\in\N: \ b_n^{maj}	\neq b_k^{maj} \right\}~. $$
Then  $\limsup_{n\to\infty} R^{maj}(n,p)<1/2$ if and only if  
$\PROB\left\{ T=+\infty \right\}>0~.$
\end{lemma}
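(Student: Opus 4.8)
The plan is to convert the statement about $R^{maj}(n,p)$ into a statement about the two marginal tail probabilities of $R_n$ and then to match the surviving asymptotic gap with the event $\{T=+\infty\}$. The chain $(R_n)$ is Markov, with a transition depending on $R_n=t$ only through $\PROB\{c_{n+1}=R\mid R_n=t\}=\PROB\{\mathrm{Bin}(k,f(t))\ge k/2\}$. This quantity is nondecreasing in $t$ (as $f$ is nondecreasing for $p\le 1/2$ and the binomial upper tail is nondecreasing in its parameter), so the chain is stochastically monotone; it is also symmetric about $1/2$, since $f(1-t)=1-f(t)$ and $k$ odd give $\PROB\{c_{n+1}=R\mid R_n=1-t\}=1-\PROB\{c_{n+1}=R\mid R_n=t\}$ (this is exactly the identity already used for $g(1-t)=-g(t)$). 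Let $(R_n')$ be a copy started from $1-R_k$, coupled so that $R_n\ge R_n'$ for all $n$; reflection gives $R_n'\stackrel{d}{=}1-R_n$, whence $\PROB\{R_n<1/2\}=\PROB\{R_n'>1/2\}$. Since $b_k^{maj}=R$, we have $R^{maj}(n,p)=\PROB\{R_n<1/2\}+\tfrac12\PROB\{R_n=1/2\}$, and a short rearrangement yields
\[
R^{maj}(n,p)=\frac{1-\Delta_n}{2},\qquad \Delta_n:=\PROB\{R_n>1/2\}-\PROB\{R_n<1/2\}=\PROB\{R_n>1/2\ge R_n'\}\ge 0 .
\]
In particular $R^{maj}(n,p)\le 1/2$ for all $n$, and the lemma is equivalent to $\liminf_n\Delta_n>0\iff\PROB\{T=+\infty\}>0$.

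Next I would identify $\lim_n\Delta_n$. Writing $A_+$ (resp.\ $A_-$) for the event that $R_n>1/2$ (resp.\ $R_n<1/2$) for all large $n$, Corollary \ref{cor:PemantleConvergence} together with the symmetry of $\xi_{n+1}$ about $1/2$ shows the oscillation set $\{R_n=1/2\ \text{i.o.}\}$ contributes equally to both marginals, so $\lim_n\Delta_n=\PROB(A_+)-\PROB(A_-)$. Reflection identifies $\PROB_{R_k}(A_-)$ with $\PROB_{1-R_k}(A_+)$, and the monotone coupling gives $\PROB_{R_k}(A_+)\ge\PROB_{1-R_k}(A_+)$; this re-proves $\Delta_\infty\ge 0$ and reduces the lemma to showing that this comparison is strict precisely when $\PROB\{T=+\infty\}>0$. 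Crucially $\{T=+\infty\}\subseteq A_+$, so $\PROB(A_+)\ge\PROB\{T=+\infty\}=:q$.

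For the converse (contrapositive) I would assume $q=0$ and prove $\Delta_\infty=0$. Then the chain crosses $1/2$ almost surely, so $A_+$ and $A_-$ are both contained (up to a null set) in $\{T<\infty\}$. Applying the strong Markov property at the first crossing time $T$, the post-$T$ chain starts from $R_T\in(1/2-1/T,1/2)$; since this start lies below $1/2$, the conditional version of the monotone-coupling/reflection inequality gives $\PROB(A_+\mid\mathcal F_T)\le\PROB(A_-\mid\mathcal F_T)$ on $\{T<\infty\}$, hence $\PROB(A_+)\le\PROB(A_-)$. Combined with the global inequality $\PROB(A_+)\ge\PROB(A_-)$ this forces equality, so $\Delta_\infty=0$ and $\limsup_nR^{maj}(n,p)=1/2$.

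For the forward direction I would assume $q>0$ and aim at $\Delta_\infty>0$. Here one isolates the never-flip paths: $\{T=+\infty\}\subseteq A_+$ carries mass $q$ and, being reflection-trivial, has no counterpart in $A_-$, while on $\{T=m\}$ the restart point $R_m$ lies in $(1/2-1/m,1/2)$, so for late first flips $R_m\to 1/2$ and, by the \emph{exact} symmetry of the kernel at $1/2$, the post-flip contributions to $A_+$ and $A_-$ nearly cancel and cannot absorb the whole mass $q$. I expect this last point to be the main obstacle: the delicate regime is the intermediate one, where $R_n\to 1/2$ almost surely yet the signal survives entirely in the asymmetry of the \emph{approach} to $1/2$ (the relative weights of $A_+$ and $A_-$), so one cannot read $\liminf_n\Delta_n$ off the almost-sure limit $R_\infty=1/2$. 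Making the strict comparison $\PROB_{R_k}(A_+)>\PROB_{1-R_k}(A_+)$ rigorous—equivalently, keeping $\PROB\{R_n>1/2\ge R_n'\}$ bounded away from $0$—while simultaneously ruling out an unbalanced contribution from the oscillation set is where the reflection symmetry must be combined carefully with the non-visiting estimate of Lemma \ref{lem:PemantleNonVisit} and the non-convergence estimate of Theorem \ref{thm:PemantleNonConvergence}.
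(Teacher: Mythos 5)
There is a genuine gap, and you have located it yourself: the forward direction ($\PROB\{T=+\infty\}>0\Rightarrow\liminf_n\Delta_n>0$) is left as an acknowledged obstacle, with the post-flip contributions to $A_+$ and $A_-$ said to ``nearly cancel.'' The missing idea is that they cancel \emph{exactly}, and once you have that, essentially all of your machinery (stochastic monotonicity, the reflection coupling $R_n\ge R_n'$, the identification of $\lim_n\Delta_n$ with $\PROB(A_+)-\PROB(A_-)$, and the treatment of the oscillation set) becomes unnecessary. The point is that the number of red vertices changes by at most one per step while $n$ increases by one, so the majority can only flip by passing through a state with $R_i=1/2$ exactly; and conditionally on $\{R_i=1/2\}$ and the past, the law of the process after time $i$ is invariant under swapping the two colors. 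Hence, conditionally on the majority having flipped by time $n$, the probability that $b_N^{maj}=b_k^{maj}$ for $N>n$ is exactly $1/2$. This yields the identity
\begin{equation*}
\PROB\left\{ b_N^{maj}=b_k^{maj}\right\}
=\PROB\left\{ b_N^{maj}=b_k^{maj},\ \forall i\in[n]:\, b_i^{maj}=b_k^{maj}\right\}
+\frac{1}{2}\left(1-\PROB\left\{ \forall i\in[n]:\, b_i^{maj}=b_k^{maj}\right\}\right),
\end{equation*}
and sandwiching the first term between $\PROB\{T=+\infty\}$ and $\PROB\{T=+\infty\}+\epsilon$ (for $n$ large, by continuity of measure along the decreasing events $\{\forall i\in[n]:\, b_i^{maj}=b_k^{maj}\}$) gives both implications of the lemma at once. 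This is precisely the paper's argument.

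Two further remarks on your plan as written. First, the step $\lim_n\Delta_n=\PROB(A_+)-\PROB(A_-)$ is not justified: $A_+$, $A_-$ and the oscillation set do not obviously split $\PROB\{R_n>1/2\}-\PROB\{R_n<1/2\}$ in the limit without exactly the conditional symmetry above (the claim that the oscillation set ``contributes equally to both marginals'' is the same unproved cancellation in disguise). Second, even your contrapositive direction invokes a ``conditional version of the monotone-coupling/reflection inequality'' at the stopping time $T$; with the exact symmetry at $1/2$ this becomes an equality and needs no monotonicity at all, which also spares you from having to verify stochastic monotonicity of the kernel (true here, but an extra burden). In short: replace ``nearly cancel'' by ``exactly cancel via the color-swap symmetry conditional on hitting $1/2$,'' and the lemma follows by the elementary decomposition above; without that observation the proposal does not close.
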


\begin{proof}
From the definition of $ R^{maj}(n,p)$, 
$$ \limsup_{n\to\infty} R^{maj}(n,p)=1-\liminf_{n\to\infty} \ \PROB\left\{b_n^{maj}=b_k^{maj}\right\}~.  $$
Fix a positive $\epsilon$. Since the sequence of events $\{\forall
i\in[n]: \ b_i^{maj}=b_k^{maj}\}$ is decreasing, and
$\{T=+\infty\}=\{\forall i >k; \  b_i^{maj}	=b_k^{maj}\}$, by continuity of measure we can choose $n$ such that
$$ \PROB\left\{ \forall i\in[n]: \ b_i^{maj}=b_k^{maj} \right\} \leq \PROB\left\{T=+\infty\right\}+\epsilon~. $$
For $N\geq n+1$, we have
\begin{equation}\label{event lemma}
\begin{aligned}
\PROB\left\{b_N^{maj}  =b_k^{maj} \right\}  = & \PROB\left\{ b_N^{maj}=b_k^{maj} \ \text{and} \ \forall i \in [n]: \ b_i^{maj}=b_k^{maj} \right\} \\
& + \PROB\left\{ b_N^{maj}=b_k^{maj} \ \text{and}\ \exists i \in [n]: \ b_i^{maj}\neq b_k^{maj} \right\}~.
\end{aligned}
\end{equation}
 The second term on the right-hand side decomposes as 
 \begin{eqnarray*}
\lefteqn{   
\PROB\left\{ b_N^{maj}=b_k^{maj} \ \text{and} \ \exists i \in [n]: \
   b_i^{maj}\neq b_k^{maj} \right\} } \\
   & = &
 \left( 1-\PROB\left\{ \forall i \in [n]: \ b_i^{maj}=b_k^{maj} \right\}\right)\PROB\left\{  b_N^{maj}=b_k^{maj} \ \Big{|} \ \exists i \in [n]:\ b_i^{maj}\neq b_k^{maj}   \right\} ~.
\end{eqnarray*}
From the definition of our process, if $R_i=1/2$, then,
conditionally on this event, the
distribution of $R_N$ for $N>i$ is symmetric. Therefore
\begin{equation}\label{eq:SymmetricAfterBalance}
 \PROB\left\{  b_N^{maj}=b_k^{maj} \ \Big{|} \ \exists i \in [n]: \ b_i^{maj}\neq b_k^{maj}   \right\}=\frac{1}{2}~.  
\end{equation}
Plugging this into \eqref{event lemma} yields
\begin{equation}\label{proba lemma}
\begin{aligned}
\PROB\left\{ b_N^{maj}=b_k^{maj} \right\}  = & \PROB\left\{ b_N^{maj}=b_k^{maj} \ \cap \ \forall i \in [n]: \ b_i^{maj}=b_k^{maj} \right\}  \\
&+ \frac{1}{2}\left( 1-\PROB\left\{ \forall i \in [n]: \ b_i^{maj}=b_k^{maj} \right\}\right) ~, 
\end{aligned}
\end{equation}
The first term of the right-hand side is bounded from below by $\PROB\left\{ T=+\infty \right\}$, which transforms \eqref{proba lemma} into
$$ \PROB\left\{ b_N^{maj}=b_k^{maj} \right\} \geq \frac{1}{2}+\PROB\left\{ T=+\infty\right\}-\frac{1}{2}\PROB\left\{ \forall i \in [n]: \ b_i^{maj}=b_k^{maj} \right\}~. $$
Taking the limit on $N$ and recalling the choice of $n$ gives 
$$ \liminf_{N\to\infty}\PROB\left\{ R_N>\frac{1}{2} \right\} \geq \frac{1}{2}+\frac{1}{2}\PROB\left\{ T=+\infty\right\}-\frac{\epsilon}{2}~.  $$
Since the above holds for any $\epsilon$, if $\PROB\left\{
  T=+\infty\right\}>0$ then $\liminf_{N\to\infty}\PROB\left\{
  b_N^{maj}=b_k^{maj} \right\}>1/2$. This proves the ``if''
direction of the statement.

On the other hand, from \eqref{proba lemma},
$$   \PROB\left\{ b_N^{maj}=b_k^{maj} \right\} \leq \PROB\left\{ \forall i \in [n]: \ b_i^{maj}=b_k^{maj} \right\} + \frac{1}{2}\left( 1-\PROB\left\{ \forall i \in [n]: \ b_i^{maj}=b_k^{maj} \right\}\right) ~.   $$
Taking the limit on $N$ and recalling the choice of $n$ yields
\begin{equation*}
\begin{aligned}
\liminf_{N\to\infty} \ \PROB\left\{ b_N^{maj}=b_k^{maj} \right\} & \leq \frac{1}{2}+   \frac{1}{2}\PROB\left\{ \forall i \in [n]: \ b_i^{maj}=b_k^{maj} \right\} \\
& \leq \frac{1}{2}+\frac{1}{2}\PROB\left\{ T=+\infty\right\}+\frac{\epsilon}{2} ~.
\end{aligned}
\end{equation*}
As this holds for any positive $\epsilon$, if $\liminf_{N\to\infty}
\PROB\left\{ b_N^{maj}=b_k^{maj} \right\}>1/2$, then $\PROB\left\{
  T=+\infty\right\}>0$. This concludes the proof.
\end{proof}

\begin{lemma} \label{Lem:SymmetricAfterBalance}
If 
$$\limsup_{n\to\infty} R^{maj}(n,p)\geq\frac{1}{2}~,$$
then
$$\lim_{n\to\infty} R^{maj}(n,p)=\frac{1}{2}~.$$
\end{lemma}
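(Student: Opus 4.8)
The plan is to reduce the statement to the characterization already established in Lemma~\ref{lemma majority} together with the exact decomposition \eqref{proba lemma}, and then to upgrade the one-sided control that appears in that proof into a genuine two-sided estimate that forces convergence.

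First I would translate the hypothesis into a statement about the first flip time $T$. Since $R^{maj}(n,p) = 1 - \PROB\{b_n^{maj} = b_k^{maj}\}$, the assumption $\limsup_n R^{maj}(n,p) \ge \tfrac12$ is the same as $\liminf_n \PROB\{b_n^{maj} = b_k^{maj}\} \le \tfrac12$. Lemma~\ref{lemma majority} is an ``if and only if'', so its contrapositive reads $\limsup_n R^{maj}(n,p) \ge \tfrac12 \iff \PROB\{T = +\infty\} = 0$. Hence the hypothesis immediately yields $\PROB\{T = +\infty\} = 0$, i.e.\ the majority flips in finite time almost surely.

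Next I would exploit the exact identity \eqref{proba lemma}, which holds for \emph{every} fixed $n$ and all $N \ge n+1$. Writing $A_n = \{\forall i \in [n]:\ b_i^{maj} = b_k^{maj}\}$, it states
\[
\PROB\{b_N^{maj} = b_k^{maj}\} = \PROB\{b_N^{maj} = b_k^{maj},\, A_n\} + \tfrac12\bigl(1 - \PROB\{A_n\}\bigr).
\]
Because the first term lies in $[0,\PROB\{A_n\}]$, this gives the uniform two-sided bound $\bigl|\PROB\{b_N^{maj} = b_k^{maj}\} - \tfrac12\bigr| \le \tfrac12\PROB\{A_n\}$ for all $N \ge n+1$. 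The events $A_n$ decrease to $\{T = +\infty\}$, so by continuity of measure $\PROB\{A_n\} \downarrow \PROB\{T = +\infty\} = 0$. Given $\epsilon > 0$, choosing $n$ with $\PROB\{A_n\} \le 2\epsilon$ makes the bound at most $\epsilon$ for all $N \ge n+1$; thus $\PROB\{b_N^{maj} = b_k^{maj}\} \to \tfrac12$, and therefore $R^{maj}(N,p) \to \tfrac12$, which is the claim.

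The only point to state with care—and the closest thing to an obstacle here—is that \eqref{proba lemma} is valid for an \emph{arbitrary} fixed $n$, not only for the $\epsilon$-dependent $n$ chosen in the proof of Lemma~\ref{lemma majority}; this is clear since its derivation uses only the conditioning decomposition and the symmetry identity \eqref{eq:SymmetricAfterBalance}, both of which hold for any $n$. The conceptual content is simply that this decomposition already sandwiches $\PROB\{b_N^{maj} = b_k^{maj}\}$ from \emph{both} sides by $\PROB\{A_n\}$, so once the hypothesis forces $\PROB\{T = +\infty\} = 0$ (and hence $\PROB\{A_n\}\to 0$), the full limit—not merely the $\liminf$—is pinned to $\tfrac12$; the remaining manipulations are a routine sandwich argument.
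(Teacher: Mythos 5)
Your proof is correct and follows essentially the same route as the paper: both arguments reduce the hypothesis, via Lemma~\ref{lemma majority}, to the almost sure finiteness of the flip time $T$, and then invoke the post-flip symmetry (the content of \eqref{eq:SymmetricAfterBalance}) to pin $\PROB\{b_N^{maj}=b_k^{maj}\}$ to $\tfrac12$ in the limit. Your packaging of this through the two-sided sandwich $\bigl|\PROB\{b_N^{maj}=b_k^{maj}\}-\tfrac12\bigr|\le\tfrac12\PROB\{A_n\}$ obtained from \eqref{proba lemma} is just a slightly more explicit bookkeeping of the same decomposition over $\{T\le n\}$ versus $\{T>n\}$ used in the paper.
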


\begin{proof}
If $\limsup_{n\to\infty} R^{maj}(n,p)\geq\frac{1}{2}$ then Lemma
\ref{lemma majority} shows that $T$ is almost surely finite. 
But since 
$$ \PROB\left\{ b_n^{maj}\neq b_k^{maj} \mid T \le n \right\}=\frac{1}{2}~, $$
this implies
$$ \PROB\left\{ b_n^{maj}\neq b_k^{maj}, \ T\leq n \right\} = \frac{1}{2}\PROB\left\{ T\leq n \right\}~. $$
Moreover, since $T$ is finite almost surely,  $ \lim_{n\to \infty}
\PROB\left\{ T\leq n \right\}= 1$
 and by the continuity of measure,
$$ \lim_n \PROB\left\{ b_n^{maj}\neq b_k^{maj}, \ T\leq n \right\}= \PROB\left\{ b_n^{maj}\neq b_k^{maj}\right\}~. $$ 
This concludes the proof of the the lemma.
\end{proof}

\subsection{The low-rate-of-mutation regime $\left( 0<p<\frac{1}{2}-\frac{1}{2\alpha_k}\right)$}\label{subsec:lmregime}

As explained in Section \ref{sec:LowRateRegime}, if $p<\frac{1}{2}-\frac{1}{2\alpha_k}$, then  $R_n$ converges to either $\beta_1$ or $\beta_2$. Next we show that if $R_1>1/2$, then $R_n$ is more likely to converge to $\beta_2$ than to $\beta_1$. To do so, recall \eqref{ratio evolution} and write it as

$$ R_{n+1} = \frac{n}{n+1}R_n+\frac{1}{n+1} B_n( g(R_n)+R_n )~, $$
where the $B_n$ are independent Bernoulli random variables. We fix $\tau \in (1/2,\beta_2)$. From the analysis of $g$ we know that $g(\tau)>0$. Since $g(t)+t=\PROB\left\{ \text{Bin}(k,f(t))\geq k/2 \right\}$ and $f$ is increasing, for all $t\geq\tau$,
$$g(t)+t\geq g(\tau)+\tau~.$$ 
Fix a positive integer $N$ and introduce the mapping
$$t\mapsto h(t) :  \ \begin{cases}
 &  h(t)=1/2 \text{ if } t<\tau \\
 \\ & h(t)=g(\tau)+\tau \text{ otherwise} ~.
\end{cases}$$
Then define $D_k=1$. For $n\geq k$, let
$$D_{n+1}=\frac{n}{n+1}D_n+\frac{1}{n+1}B'_n\left(  h(D_n)\right)~,$$
where $B'_n$ are independent Bernoulli random variables. From the definition of the process $(D_n)$, on the event $\left\{ D_n\geq \tau, \ \forall n \geq 1 \right\}$
$$ nD_n \ge D_k+\text{Bin}(n-k,g(\tau)+\tau)~. $$
Hence, by the union bound and Hoeffding's inequality,
$$  \PROB\left\{ \exists i \geq N: \ D_i\leq \tau \ | \ \forall n \in [k,N]: \ D_n\geq \tau \right\} \leq \sum_{i\geq N} \PROB\left\{  \text{Bin}(i-k,g(\tau)+\tau)\leq i\tau \right\}\leq \frac{ 2e^{-(N-k)g(\tau)^2}}{1-e^{-2g(\tau)^2}}~. $$
Choosing $N$ such that the last term above is less than one yields
$$ \PROB\left\{ \forall i\geq N: \ D_i\geq \tau\ | \ \forall n\in [k,N]: \ D_n\geq \tau\right\}>0~. $$
Since
$$ \PROB\left\{ \forall i\geq k: \ D_i\geq \tau\right\} = \PROB\left\{ \forall i\in [k,N]: \ D_i\geq \tau\right\}\times\PROB\left\{ \forall i\geq N: \ D_i\geq \tau\ | \ \forall n\in [k,N]; \ D_n\geq \tau\right\}~,$$
we just proved that
\begin{equation}\label{eq:NotReachingTau}
 \PROB\left\{ \forall i\geq k: \ D_i\geq \tau\right\} >0~.
\end{equation}
Define the stopping time $T'=\min\left\{ n\geq k; \ D_n\leq\tau \right\}$. Since for all $t\geq\tau$, $g(t)+t\geq g(\tau)+\tau$, on the event $\left\{R_k\geq D_k\geq \tau\right\}$, there exists a coupling of the Bernoulli random variables $B$ and $B'$ such that
$$\forall n\in[k,T']: \ B_n\geq B'_n~,$$
and thus a coupling of the random variables $R_n$ and $D_n$ such that
$$ \forall n\in [k,T']: \ R_n\geq  D_n~. $$
From this coupling and \eqref{eq:NotReachingTau} we have
$$\PROB\left\{\forall n\geq k: \ R_n>\frac{1}{2}\right\} >0~,$$
which, thanks to Lemma \ref{lemma majority}, proves that in the regime $p < 1/2-1/(2\alpha_k)$,
$$  \limsup_{n\to\infty} R^{maj}(n,p)<\frac{1}{2}~, $$
proving the first statement of Theorem \ref{thm:MainThm}.

\subsection{The high-rate-of-mutation regime  $\left( \frac{1}{2}-\frac{1}{2\alpha_k}\leq p\leq\frac{1}{2}\right)$ }

In the range $p>1/2-1/(2\alpha_k)$ the proportion of red balls converges to $1/2$. It does not mean that majority voting can not be better than random guessing. Indeed, the proportion can converge to $1/2$ from above. This is this possibility that will now be investigated. 

\subsubsection{Extreme rate}

First, we examine the ``extreme'' case when the rate of mutation is near $1/2$, more precisely when $p>1/2-1/(4\alpha_k)$. Define the linear function $h$ by $h(t):=g'(1/2)\left( t-1/2 \right)$. Then

\begin{equation*}
g(t) \begin{cases}
& \geq h(t), \ \text{if }    t\in[0,1/2], \\
 & \leq h(t), \ \text{if } t\in[1/2,1]~.
\end{cases}•
\end{equation*}
In Figure \ref{fig:3} we plot $h$ and $g$.

\begin{figure}[H]
\begin{center}
\includegraphics[scale=0.6]{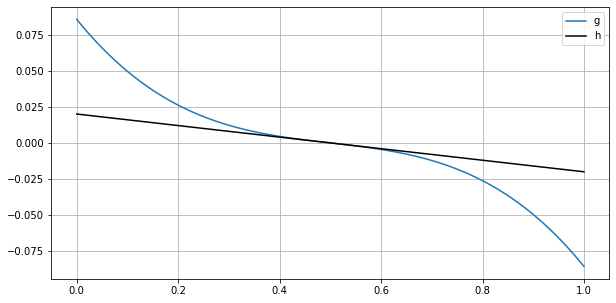}
\end{center}
\caption{A linear lower bound for $|g|$, $k=3$ and $p=0.18$.}
\label{fig:3}
\end{figure}
\noindent Let us define an auxiliary process $R^*_n$ by the stochastic recursion $R^*_k=1$ and for $n\geq k$

$$ R^*_{n+1}=R^*_n+\frac{B_n\left( h(R^*_n)+R^*_n \right)-R^*_n}{n+1}~, $$
where $B_n\left( h(R^*_n)+R^*_n \right)$ is a Bernoulli random variable with parameter $h(R^*_n)+R^*_n$, conditionally independent of $R^*_n$. In particular,

$$ \E\left[ B_n\left( h(R^*_n)+R^*_n \right)-R^*_n|R^*_n=t \right]=h(t)~.$$
Since the value of $g$ (for $(R_n)$) and $h$ (for $(R^*_n)$) represents a drift in the processes $R_n$ and $R^*_n$ we expect that the process $(R^*_n)$ is further away from $1/2$. Indeed, we may introduce a coupling as follows. Define the stopping time $T^*$ as the first time $R^*$ reaches $1/2$:

$$ T^*:=\min\left\{ n\geq k: \ R^*_n\leq \frac{1}{2} \right\}~. $$
Since for the times $n\in[k,T^*]$, $h(R^*_n)\geq g(R_n)$, we may use a similar coupling argument as in Section \ref{subsec:lmregime}. Thus, there is a coupling of $R^*$ and $R$ such that

$$ \forall n \in [k,T^*]; \ R_n\leq R^*_n~. $$
From this coupling, for $T$ defined in Lemma \ref{lemma majority} we have

\begin{equation}\label{majority not working}
\PROB\left\{ T=+\infty\right\} \leq \PROB\left\{ T^*=+\infty \right\}~.
\end{equation}
Observe that in the case of $k=1$, $g$ is linear and the two processes $R_n$ and $R^*_n$ coincide. The linear case was analyzed in \citet{AdDeLuVe22} and we may use their results to understand the behavior of $R_n^*$. Indeed, the process defined in \citet{AdDeLuVe22} is the same as $R^*$ if one sets the flip probability of \citet{AdDeLuVe22} equal to $-g'(1/2)/2$  and starts at time $k$. They prove that if $p\geq 1/4$, then, for the process starting at time $1$, majority voting has an error probability of $1/2+o(1/2)$.  Lemma \ref{lemma majority} implies that this process reaches $1/2$ in finite time almost surely. So even conditioned on its value being $1$ at time $k$ it will reach $1/2$ in finite time almost surely. This proves that even for $R_n^*$ starting at time $k$ its error probability is $1/2+o(1)$.   According to Lemma \ref{lemma majority} this implies that for this range of $p$, $\PROB\left\{ T^*=+\infty \right\}=0$. Hence, using Lemma \ref{lemma majority} and \eqref{majority not working}, shows that if $g'(1/2) \leq -\frac{1}{2},$ then

$$  \limsup_{n\to\infty} R^{maj}(n,p)=\frac{1}{2}~.  $$
Lemma \ref{Lem:SymmetricAfterBalance} shows that $\lim_{n\to\infty} R^{maj}(n,p)=1/2$. Because $g'(1/2)=(1-2p)\alpha_k-1$, we just proved that if $p\geq 1/2-1/4\alpha_k$, then

$$ \lim_{n\to\infty} R^{maj}(n,p)=\frac{1}{2}~, $$
completing the proof of the third statement of Theorem \ref{thm:MainThm}.

\subsubsection{Intermediate rate}
\label{sec:intermediate}

It remains to study the ``intermediate'' case $p\in[1/2-1/(2\alpha_k),1/2-1/(4\alpha_k))$. To this end, we may couple $R_n$ to a process for which majority voting outperforms random guessing. Let us fix $p\in[1/2-1/(2\alpha_k),1/2-1/(4\alpha_k))$, which implies that $g'(1/2)/2>-1/4$. Then choose $q=-g'(1/2)/2+\epsilon$ with $\epsilon>0$ small enough so that $q<1/4$ and $g(0)>h(0)$. We define the linear function $h(t):=-2q(t-1/2)$, and as illustrated in Figure \ref{fig:4}, we denote by $a$ and $b$ the intersection points between $h$ and $g$ (apart from $1/2$). More precisely $a$ and $b$ are defined as the the roots of $g-h$ distinct from $0$. Since $g-h$ is strictly convex on $(0,1/2)$ and $(g-h)(0)>0$, $(g-h)'(1/2)<0$, $a$ and $b$ are well defined and sit respectively in $(0,1/2)$ and $(1/2,1)$.

\begin{figure}[H]
\begin{center}
\includegraphics[scale=0.6]{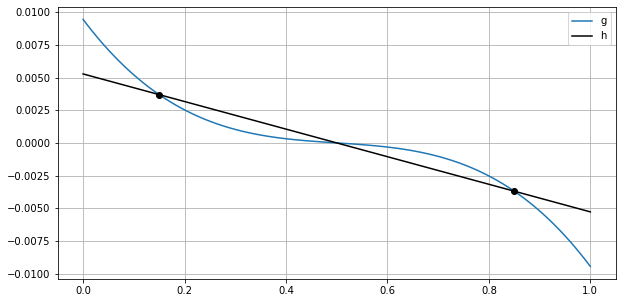}
\end{center}
\caption{Comparison of $h$ and $g$, for $k=3$ and $p=0.34$ (rescaled for clarity).}
\label{fig:4}
\end{figure}
\noindent We define $R^*_n$ similarly as in the previous section but now with $h(t)=-2q(t-1/2)$, that is $R^*_k=1$ and

$$ R^*_{n+1}=R^*_n+\frac{B_n\left( h(R^*_n)+R^*_n \right)-R^*_n}{n+1}~, $$
where the $B_n$ are conditionally independent Bernoulli random variables. In particular,

$$ \E\left[B_n\left( h(R^*_n)+R^*_n \right)-R^*_n|R^*_n=t \right]=-2q\left( t-\frac{1}{2} \right)~.$$
Just as in the previous section, we may use the analysis of
\citet{AdDeLuVe22} for the case $k=1$ with mutation probability of
$q$. \citet{AdDeLuVe22} state that  for the process $R^*_n$
started at time $1$ and for  $q<1/4$ majority voting is better than
random guessing.
As it was pointed out to us by a referee, the proof in
  \citet{AdDeLuVe22} that uses general  limit theorems for Pólya urns with randomized
replacements due to Janson \cite{10.1214/19-ECP226} is incorrect. In
the Appendix we give a self-contained proof of this statement.

To use the result for trees, we need to make sure that
  it holds when $R^*_n$ is defined as above, started at time $k$. Let
  $R'^*_n $ be the process started at time $1$, for which the majority
  voting is known to outperform random guessing. Let $T'^*$ and $T^*$
  be the first time indices at which $R'^*_n$ and $R^*_n$ reach $1/2$,
  respectively. Finally, let $(U_n)_{n\in \N}$ be a collection of independent uniform random variables. For $n\in[k,T'^*)$, we couple $R'^*_n$ and $R^*_n$ as follows
$$ R'^*_{n+1}=R'^*_n+\frac{\IND\big(U_n\leq h(R'^*_n)+R'^*_n \big)-R'^*_n}{n+1}~, $$
and
$$ R^*_{n+1}=R^*_n+\frac{\IND\big(U_n\leq h(R^*_n)+R^*_n \big)-R^*_n}{n+1}~.$$
With this coupling, a recursion proves that for all $n\in[k,T'^*)$, $R^*_n\geq R'^*_n$.   Because majority voting is known to outperform random guessing for $R'^*$, Lemma \ref{lemma majority} proves that $\PROB\left\{T'^*=+\infty \right\}>0$.  The coupling directly implies that $\PROB\left\{T^*=+\infty \right\}>0$. So majority voting outperforms random guessing for the process $R^*$.  Thus, from Lemma \ref{lemma majority} it follows that 

$$ \PROB\left\{ \forall n \geq k: \ R^*_n>\frac{1}{2} \right\}>0~. $$
Now, from Lemma \ref{lem:PemantleNonVisit} we deduce that both processes $R_n$ and $R^*_n$ converge almost surely to $1/2$ and exceed $b$ only finitely many times. Thus, there exists an almost surely finite random time  $T'$ such that and $\forall n \geq T';$ $R_n\leq b$ and $R^*_n\leq b$. We use similar coupling arguments as in Section \ref{subsec:lmregime}. So, on the event that $R^*$ does not reach $1/2$ we can couple $R_n$ and $R^*_n$ from $T'$ onwards such that $R_n\geq R^*_n$. This proves that

$$ \PROB\left\{ \forall n \geq T': \ R_n>\frac{1}{2} \ | \ T' \right\} >0~. $$
Using that $T'$ is finite almost surely and Lemma \ref{lemma majority} we conclude that majority voting is better than random guessing in this regime. More precisely, if $1/2-1/2\alpha_k\leq p <1/2-1/4\alpha_k$, then

$$\limsup_{n\to\infty} R^{maj}(n,p)<\frac{1}{2}~.$$
This completes the proof of Theorem \ref{thm:MainThm}.

\section{A general lower bound}
\label{sec:lower}

In this final section we derive a lower bound for the probability of error that holds for all mutation probabilities. In particular we show the following.

\begin{proposition}
Let $k$ be a positive odd integer and let $k/2<\ell<k$. Assume that
initially there are $\ell$ red vertices, that is $R_k=\ell/k$.
Letting
\[
  h_k := \PROB\left\{ \text{Beta}\left(
      \frac{k+1}{2},\frac{k+1}{2}\right)\geq 1-\frac{1}{k} \right\}~,
\]  
the probability of error of the majority rule satisfies
\[
  \inf_{0\leq p \leq  1\atop \ n\geq 2\ell} \PROB\left\{ b_n^{maj}\neq b_k^{maj} \right\} \geq \frac{1}{2}h_k^{2\ell-k}~. 
\]
\end{proposition}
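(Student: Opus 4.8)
The plan is to exhibit a single explicit ``bad'' event, built over the first $2\ell-k$ additions, whose probability is bounded below by $h_k^{2\ell-k}$ \emph{uniformly in} $p$, and on which the majority provably errs with probability exactly $1/2$. Since $R_k=\ell/k>1/2$, the initial majority is red, so $b_k^{maj}$ is ``R'' and the error event is $\{b_n^{maj}=\text{``B''}\}$. First I would introduce the event $A$ that each of the $2\ell-k$ vertices added at times $k+1,\dots,2\ell$ is blue. On $A$ the counts are deterministic: after $m-k$ all-blue additions there are exactly $\ell$ red and $m-\ell$ blue vertices, so $R_m=\ell/m$ for every $m\in\{k,\dots,2\ell\}$. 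Consequently $R_m$ decreases from $R_k=\ell/k\le 1-1/k$ down to $R_{2\ell}=\ell/(2\ell)=1/2$, so that $R_m\in[\,1/2,\,1-1/k\,]$ throughout the window, and crucially $R_{2\ell}=1/2$ exactly.

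The heart of the argument is a per-step lower bound that does not depend on $p$. I would decompose $\PROB(A)$ into the product of the conditional factors $\PROB\{c_{m+1}=\text{``B''}\mid c_{k+1}=\cdots=c_m=\text{``B''}\}$ for $m=k,\dots,2\ell-1$. Conditioned on all previous additions being blue, $R_m=\ell/m$ is pinned down, so this factor equals $\PROB\{\text{Bin}(k,f(\ell/m))\le (k-1)/2\}$, which is a decreasing function of $f(\ell/m)$. Because $\ell/m\ge 1/2$, the map $p\mapsto f(\ell/m)=(1-2p)(\ell/m)+p$ is nonincreasing, whence $f(\ell/m)\le \ell/m$ for \emph{every} $p\in[0,1]$; thus each factor is at least $\PROB\{\text{Bin}(k,\ell/m)\le (k-1)/2\}$. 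Using the Beta embedding \eqref{BetaEmbedding}, monotonicity in the success probability, and $\ell/m\le\ell/k\le 1-1/k$, this is at least
\[
\PROB\left\{\text{Bin}\left(k,1-\tfrac1k\right)\le \tfrac{k-1}{2}\right\}
=\PROB\left\{\text{Beta}\left(\tfrac{k+1}{2},\tfrac{k+1}{2}\right)\ge 1-\tfrac1k\right\}=h_k .
\]
Multiplying the $2\ell-k$ factors yields $\PROB(A)\ge h_k^{2\ell-k}$, valid for every $p\in[0,1]$.

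Finally I would upgrade the conclusion from $n=2\ell$ to all $n\ge 2\ell$ using symmetry. On $A$ we have $R_{2\ell}=1/2$, and by the symmetry of the process once the proportion reaches $1/2$ (the property recorded around \eqref{eq:SymmetricAfterBalance}: conditionally on $R_{2\ell}=1/2$, the law of $R_n$ is symmetric about $1/2$ for all $n\ge 2\ell$), the events $\{R_n<1/2\}$ and $\{R_n>1/2\}$ are equiprobable and ties are resolved by an independent fair coin, so $\PROB\{b_n^{maj}=\text{``B''}\mid A\}=1/2$ for every $n\ge 2\ell$. Combining,
\[
\PROB\left\{b_n^{maj}\ne b_k^{maj}\right\}\ \ge\ \PROB(A)\cdot\PROB\left\{b_n^{maj}=\text{``B''}\mid A\right\}\ \ge\ \tfrac12\,h_k^{2\ell-k},
\]
and since this bound is uniform in $p$ and in $n\ge 2\ell$, taking the infimum gives the claim.

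The main obstacle is securing the per-step bound simultaneously for all $p\in[0,1]$. The non-obvious point is that the all-blue window of length exactly $2\ell-k$ is precisely the one keeping $R_m\ge 1/2$ (terminating at $1/2$), which is what forces $f(R_m)\le R_m$ for every $p$; for a longer window, or a different target event, $R_m$ would drop below $1/2$ and an adversarial $p$ near $1$ could render blue additions unlikely, destroying the uniform factor $h_k$.
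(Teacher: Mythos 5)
Your proposal is correct and follows essentially the same route as the paper: condition on the event that the first $2\ell-k$ added vertices are all blue, use the symmetry of the process after the proportion hits $1/2$ to get the factor $\tfrac12$, and lower-bound each conditional blue-addition probability by $h_k$ via the Beta embedding \eqref{BetaEmbedding}. Your handling of the per-step bound is in fact slightly more careful than the paper's (which momentarily restricts to $p\le 1/2$): observing that $p\mapsto f(\ell/m)$ is nonincreasing when $\ell/m\ge 1/2$ cleanly gives $f(\ell/m)\le \ell/m\le 1-1/k$ uniformly over all $p\in[0,1]$.
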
  

\begin{proof}
The proposition follows by simply considering the event that
the first $2\ell-k$ new vertices are all blue. In that case, at time
$2\ell$ the number of red and blue vertices are equal.
We may write, for any $n\ge 2\ell$,
\begin{align*}
\PROB\left\{ b_n^{maj}\neq b_k^{maj} \right\} & \geq \PROB\left\{ b_n^{maj}\neq b_k^{maj} \ | \ c_{k+1}=\cdots= c_{2\ell }=B \right\}\times \PROB\left\{  c_{k+1}=\cdots= c_{2\ell }=B \right\}  ~.
\end{align*}
From the symmetry of our model, $\PROB\left\{ b_n^{maj}\neq b_k^{maj}  \ | \  c_{k+1}=\cdots = c_{2\ell }=B\right\}=1/2$. Thus
$$ \PROB\left\{ b_n^{maj}\neq b_k^{maj} \right\} \geq \frac{\PROB\left\{  c_{k+1}=\cdots c_{2\ell }=B \right\}}{2}~. $$
To estimate the probability on the right-hand side, we use
\eqref{BetaEmbedding}, which implies
\[
  \PROB\left\{ c_{i}=B
  \right\}=\int_{f(R_i)}^1\left(x(1-x)\right)^{\frac{k-1}{2}}\frac{\Gamma(k+1)}{\Gamma^2\left(\frac{k+1}{2}\right)}dx
  ~.
\]  
If $R_k=\ell/k$ and $c_{k+1}=\cdots = c_{i-1}=B$, where $k<i\leq 2k$, then $R_{i-1}=\ell/i$. Since $0\leq p \leq 1/2$,
\[
  f(R_{i-1})=(1-2p)\frac{\ell}{i}+p\leq \max\left( \frac{1}{2},
    \frac{\ell}{i} \right) \leq \frac{k-1}{k}=1-\frac{1}{k}~.
\]  
Therefore,

$$ \min_{k<i\leq 2\ell} \PROB\left\{ c_i=B \ \mid \ c_{k+1}=\cdots = c_{i-1}=B \right\}\geq h_k~,$$
as claimed.
\end{proof}

\section{Concluding remarks}

In this paper we study the majority rule for guessing the initial bit
values at the roots of a random recursive $k$-{\sc dag} in a
broadcasting model. The main
result of the paper characterizes the values of the mutation probability 
for which the majority rule performs strictly better than random
guessing. Even in this exact model, many interesting questions remain
open. For example, we do not have sharp bounds for the probability of
error.
It would also be interesting to study other, more sophisticated,
classification rules that take the structure of the observed 
$k$-{\sc dag} into account. In particular, the optimal probability of
error (as a function of $k$ and the mutation probability $p$) is
far from being well understood. For an initial study of localizing the
root vertices, we refer the interested reader to \citet{BrCaLu23}.

 A natural extension of the model is obtained by considering
  $q>2$ colors. In this model, one aims at guessing the most common
  color of the initial configuration. In order to extend our results,
  instead of a single number, one needs to consider a vector of dimension
  $q-1$  to track the proportion of each color. For example, one may
  consider the following rule to assign a color to a new
  vertex. At each step, among the $k$ observed colors, pick the most
  common (break ties uniformly
  at random). However, the analysis becomes most complex since
  instead of comparing one random
  variable to $1/2$, one needs to
  compare a random variable to $q-2$ others to determine which one is
  the most common. If one manages to write this recursion in a
  tractable manner, we believe that a similar approach as the one of
  this paper may be used to understand the evolution of the proportion
  of each color. Depending on the convergence regime, an important
  part of our proof relies on the comparison to the tree case, that
  is, the case $k=1$. In a tree, one way to study the multi-color problem
  is to group $q-1$ of the colors together. By doing so, the
  multi-color problem is simplified to a two-color problem in the tree
  with a non-symmetric flip probability.
  However, the details may be nontrivial and are left to future research.

\subsection*{Acknowledgements}
  We thank the referees for their interesting comments. 
  We are especially grateful to a referee for pointing out a mistake in
  \cite{AdDeLuVe22} which is fixed in the Appendix below.
 
\section{Appendix}
\subsection{In trees majority is better than random guessing for $p < 1/4$}

In Section 2.3 of \cite{AdDeLuVe22} it is stated that the majority
vote is asymptotically better than random guessing when $k=1$ (i.e.,
the graph is a uniform random recursive tree) and the mutation
probability $p$ is less than $1/4$. This result is used in Section
\ref{sec:intermediate} above.

In \cite{AdDeLuVe22} it is claimed that this follows simply from
a general limit theorem for Pólya urns with randomized
replacements due to Janson \cite{10.1214/19-ECP226}. However, the
claimed symmetry in Janson's limit distribution was not checked in
\cite{AdDeLuVe22}.
In order to remedy this, in the next Proposition we give a self-contained proof of the
statement.

Suppose, without loss of generality, that the root vertex is red, that
is, $R_1=1$. Define the difference between the number of red and blue
balls at time $n$ by $\Delta_n= n(R_n-B_n)$.

\begin{proposition}
If $p<1/4$, then
  \[
    \liminf_{n\to \infty} \PROB\left\{ \Delta_n > 0 \right\} > \frac{1}{2}~.
 \]
\end{proposition}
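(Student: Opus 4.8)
The plan is to track the signed count difference $\Delta_n = n(R_n-B_n)$, which moves by $+1$ when the new vertex is red and by $-1$ when it is blue, and to turn it into a martingale after the correct normalization. For $k=1$ a new vertex copies its single parent's (possibly flipped) colour, so $\PROB\{c_{n+1}=R \mid \F_n\} = f(R_n)$; writing $\rho := 1-2p$ this gives
$$\E[\Delta_{n+1}-\Delta_n \mid \F_n] = 2f(R_n)-1 = (1-2p)(2R_n-1) = \rho\,\frac{\Delta_n}{n}~,$$
hence $\E[\Delta_{n+1}\mid \F_n] = (1+\rho/n)\Delta_n$. Setting $a_n := \prod_{j=1}^{n-1}(1+\rho/j)$, so that $a_n \sim n^{\rho}/\Gamma(1+\rho)$ by Stirling, and $M_n := \Delta_n/a_n$, the process $(M_n)$ is a martingale with $M_1 = \Delta_1/a_1 = 1$.

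The heart of the argument, and the step where $p<1/4$ enters, is to show that $(M_n)$ is bounded in $L^2$. The increment of $\Delta$ being $\pm1$, its conditional variance is $1 - \rho^2\Delta_n^2/n^2 \le 1$, so
$$\E[M_{n+1}^2\mid \F_n] - M_n^2 = \frac{\var(\Delta_{n+1}\mid \F_n)}{a_{n+1}^2} \le \frac{1}{a_{n+1}^2}~,$$
and therefore $\sup_n \E[M_n^2] \le M_1^2 + \sum_{n} a_{n+1}^{-2}$. Since $a_n \asymp n^{\rho}$, the series $\sum_n a_{n+1}^{-2}\asymp\sum_n n^{-2\rho}$ converges exactly when $2\rho>1$, i.e. when $p<1/4$. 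Thus for $p<1/4$ the martingale is $L^2$-bounded, so it converges almost surely and in $L^2$ to a limit $M_\infty$, and by uniform integrability $\E[M_\infty] = \E[M_1] = 1$.

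I would then convert this positive mean into a positive probability that $\Delta_n$ never vanishes. Let $T := \inf\{n : \Delta_n = 0\}$. The stopped process $N_n := M_{n\wedge T}$ is again a martingale, and it is nonnegative since $\Delta_m > 0$ for $m<T$ while $M_T = 0$; as a stopped $L^2$-bounded martingale it is uniformly integrable, so $\E[N_\infty] = \E[N_1] = 1$. On $\{T<\infty\}$ one has $N_\infty = 0$ and on $\{T=\infty\}$ one has $N_\infty = M_\infty$, i.e. $N_\infty = M_\infty\,\IND\{T=\infty\}$. A nonnegative random variable of mean $1$ is strictly positive with positive probability, which forces $\PROB\{T=\infty\}>0$.

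Finally I would use the sign symmetry of the dynamics — the same identity $f(1-t)=1-f(t)$ underlying Lemma \ref{lemma majority} — to conclude. Once $\Delta$ hits $0$ its future law is invariant under sign reversal, so for odd $n$ (for which $\Delta_n\neq 0$) one gets $\PROB\{\Delta_n>0 \mid T\le n\} = \tfrac12$, whence $\PROB\{\Delta_n>0\} = \tfrac12 + \tfrac12\PROB\{T>n\}$; letting $n\to\infty$ along odd integers and using $\PROB\{T>n\}\downarrow \PROB\{T=\infty\}$ yields $\liminf_n \PROB\{\Delta_n>0\} = \tfrac12 + \tfrac12\PROB\{T=\infty\} > \tfrac12$ (the general $\liminf$ follows in the same way, the tie events $\{\Delta_n=0\}$ being asymptotically negligible). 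I expect the $L^2$-boundedness estimate to be the crux: it is precisely there that the threshold $p=1/4$ (equivalently $2\rho=1$, the balance between the $n^\rho$ drift of $\Delta_n$ and its diffusive $\sqrt n$ fluctuations) appears, while everything downstream is soft martingale theory together with the model's symmetry.
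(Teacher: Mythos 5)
Your route is genuinely different from the paper's. The paper decomposes the tree into monochromatic subtrees rooted at the marked vertices, writes $\Delta_n = N_0 + W_n$ with $W_n$ symmetric given everything but the flip signs, reduces the claim to $\limsup_n \PROB\{N_0\le |W_n|\}<1$, and finishes with second-moment bounds on the subtree sizes $N_i$ imported from \cite{AdDeLuVe22}. You instead normalize $\Delta_n$ by $a_n=\prod_{j<n}(1+\rho/j)\asymp n^{\rho}$ to get a martingale, use the bounded increments to show $L^2$-boundedness exactly when $2\rho>1$ (i.e.\ $p<1/4$), and extract $\PROB\{T=\infty\}>0$ from the stopped martingale $M_{n\wedge T}$ having mean $1$ while vanishing on $\{T<\infty\}$. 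This part of your argument is correct and, to my mind, cleaner: it isolates the $p=1/4$ threshold as an $L^2$-summability condition and avoids the combinatorial decomposition entirely. What the paper's approach buys in exchange is that it bounds $\PROB\{\Delta_n\le 0\}$ directly, so ties are absorbed for free, and it comes with quantitative control.

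That difference matters, because your final step has a genuine gap. Your identity $\PROB\{\Delta_n>0\}=\tfrac12+\tfrac12\PROB\{T>n\}$ is exact only for odd $n$; for even $n$ the symmetry after $T$ gives
\[
\PROB\{\Delta_n>0\}=\tfrac12+\tfrac12\PROB\{T>n\}-\tfrac12\PROB\{\Delta_n=0\}~,
\]
so you need $\limsup_{n}\PROB\{\Delta_n=0\}<\PROB\{T=\infty\}$. Calling the tie events ``asymptotically negligible'' is not soft martingale theory: a priori it amounts to an anti-concentration statement, essentially that the limit $M_\infty$ has no atom at $0$, and nothing in your argument delivers that. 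Fortunately the gap is fillable by an elementary one-step recursion on the point masses. Writing $q_n(j)=\PROB\{\Delta_n=j\}$ and using that $\PROB\{\Delta_{n+1}=\Delta_n+1\mid \F_n\}=\tfrac12+\rho\Delta_n/(2n)$, one gets
\[
q_{n+1}(j)=\Bigl(\tfrac12+\rho\tfrac{j-1}{2n}\Bigr)q_n(j-1)+\Bigl(\tfrac12-\rho\tfrac{j+1}{2n}\Bigr)q_n(j+1)\le \Bigl(1-\tfrac{\rho}{n}\Bigr)\max_j q_n(j)~,
\]
(the two coefficients are nonnegative probabilities and sum to $1-\rho/n$), whence $\max_j q_n(j)=O(n^{-\rho})\to 0$ and in particular $\PROB\{\Delta_n=0\}\to 0$. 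With that supplement your proof is complete; without it, the even-$n$ case of the $\liminf$ is unproved.
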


\begin{proof}
We use the representation defined in Section 2.1 of
\cite{AdDeLuVe22} for the difference $\Delta_n$, which we recall now.

The URRT is generated in the standard way, without 
attached colors, with $0$ being the root vertex, and for $i\in
\{1,\ldots,n\}$,
$p_i\in \{0,\dots , i-1\}$ is the uniform random index of the parent
of vertex $i$.
Coloring the vertices may be equivalently done as follows:
\begin{itemize}
\item let $M_1,M_2,\ldots,M_n \in \{0,1\}$ be independent Bernoulli$(2p)$ random variables. When $M_i=1$,
  vertex $i$ is \emph{marked}. Then there is an independent coin flip $\xi_i$ that takes values uniformly at random in  $\{-1,1\}$ and determines if a marked node takes the same color as its parent or it flips.
\item when $M_i=0$, vertex $i$ is \emph{not marked}. These nodes
  have the same color as their parent.
\end{itemize}
The root and marked nodes become roots of subtrees that are disjoint
and partition the uniform recursive tree into many pieces. Each of the
subtrees consists of nodes of the same color, and the roots have the color of their original parent if $\xi =1$ and different otherwise (if $\xi_i =-1$). 
More precisely, if $B_i\in \{-1,1\}$ is the color of
vertex $i$ (with $+1$ interpreted as ``red'' and $-1$ as ``blue''), then 

\[ B_i = \left\{ \begin{array}{ll}
         B_{p_i} & \mbox{if $M_i=0$ (no marking) or if $M_i=1,\ \xi_i=+1$ (no flipping)};\\
        -B_{p_i} & \mbox{if $M_i=1,\ \xi_i=-1$}.\end{array} \right. \]   Let $N_i$ denote the size of the maximal subtree rooted at vertex $i$
such that all its vertices apart from $i$ are unmarked (and therefore monochromatic).


With this notation, $\Delta_n$ may be written as
$\Delta_n = N_0+W_n$, where
\[
   W_n= \sum_{i=1}^n N_i B_{p_i} \xi_i M_i~.
 \]
Since the Rademacher random variables $\xi_i$ are independent of all
other random variables, $W_n$ has a symmetric distribution about $0$.
In particular, by conditioning on all other random variables, we get
that
\[
   \PROB\left\{ \Delta_n \le 0\right\}= \frac{1}{2}\PROB\left\{ N_0 \le |W_n| \right\}~.
 \]
Hence, it suffices to show that $\limsup_{n\to \infty} \PROB\left\{
  N_0 \le |W_n| \right\} < 1$.
To this end,
for a positive integer $k$,  let $\mathcal{E}_k$ be the event that
the first $k$ vertices are unmarked, that is, $M_1=\cdots = M_k=0$.
Clearly, $\PROB\{\mathcal{E}_k\}= (1-2p)^k$.
Then
\begin{eqnarray*}
  \PROB\left\{     |W_n| \ge  N_0 \right\}
 & \le & \PROB\{\mathcal{E}_k^c\} +   \PROB\left\{     |W_n| \ge
    N_0,  \mathcal{E}_k\right\} \\
 & = & 1- \PROB\{\mathcal{E}_k\} + \PROB\left\{     \left|\sum_{i=k+1}^n N_i B_{p_i} \xi_i M_i\right| >
       N_0,  \mathcal{E}_k\right\}\\
 & = & 1- \PROB\{\mathcal{E}_k\} \left(1- \PROB\left\{     \left|\sum_{i=k+1}^n N_i B_{p_i} \xi_i M_i\right| >
       N_0\right\} \right)~,
\end{eqnarray*}
where we used the fact that the events $\mathcal{E}_k$
and $\left|\sum_{i=k+1}^n N_i B_{p_i} \xi_i M_i\right| > N_0$
are independent.
Thus, it suffices to prove that there exists an integer $k>0$ such that
$\limsup_{n\to\infty} \PROB\left\{     \left|\sum_{i=k+1}^n N_i B_{p_i} \xi_i M_i\right| >
       N_0\right\} <1$.
To this end, we may write
\begin{equation}
\label{eq:bound}
  \PROB\left\{     \left|\sum_{i=k+1}^n N_i B_{p_i} \xi_i M_i\right| >
       N_0\right\}
 \le  \PROB\left\{     \left|\sum_{i=k+1}^n N_i B_{p_i} \xi_i M_i\right| >
      \frac{\EXP N_0}{2}\right\}
       + \PROB\left\{     N_0 <  \frac{\EXP N_0}{2} \right\}~.
     \end{equation}
We show that the second term on the right-hand side is bounded away
from one, while the first term can be made arbitrarily small by
choosing $k$ sufficiently large.
In order to bound the second term on the right-hand side of
\eqref{eq:bound}, we use
the fact that by \cite[Lemmas 4 and 6]{AdDeLuVe22},
$\EXP N_0 \ge e^{-1}(n+1)^{1-2p}$ and
$\var(N_0) \le c(p) (n+1)^{2-4p} + O(n\log n)$ for a 
constant $c(p)>0$ depending on $p$ only. Hence, using the
Chebyshev-Cantelli inequality,
\[
  \PROB\left\{     N_0 \ge \frac{\EXP N_0}{2} \right\}
  \ge \frac{(\EXP N_0)^2}{(\EXP N_0)^2+4 \var(N_0)}
  \ge \frac{e^{-2}}{e^{-2}+c(p) +o_n(1)}~,
\]
which is clearly bounded away from $0$.

Using once again the bound
$\EXP N_0 \ge e^{-1}(n+1)^{1-2p}$,
the first probability on the right-hand side on \eqref{eq:bound} may be upper bounded, using
Markov's inequality, by
\[
   \PROB\left\{     \left|\sum_{i=k+1}^n N_i B_{p_i} \xi_i M_i\right|\ge
     \frac{ (n+1)^{1-2p}}{2e}  \right\}
   \le \frac{\left( \EXP \left(\sum_{i=k+1}^n N_i B_{p_i} \xi_i
         M_i\right)^2\right)^{1/2}}{ (n+1)^{1-2p}/2e}
\]
Since the  $\xi_i$ are independent of all
other random variables, 
\[
   \EXP \left(\sum_{i=k+1}^n N_i B_{p_i} \xi_i
     M_i\right)^2
   = \sum_{i=k+1}^n\EXP N_i^2 \le
   \sum_{i=k+1}^n \left( \left(\frac{n+1}{i+1}\right)^{2-4p} e^4(4+e)
     + e \right)~,
 \]
where the upper bound for $\EXP N_i^2$ follows from \cite[Lemma
6]{AdDeLuVe22}. Thus,
\begin{eqnarray*}
   \PROB\left\{     \left|\sum_{i=k+1}^n N_i B_{p_i} \xi_i M_i\right|\ge
     \frac{(n+1)^{1-2p}}{2e}  \right\}
  & \le & \left(2e^3\sqrt{4+e}\right) \left(\sum_{i=k+1}^n
          \frac{1}{(i+1)^2}\right)^{1/2} + 2e^{3/2} n^{-1/2+2p}  \\
  & \le &
          \frac{2e^3\sqrt{4+e}}{\sqrt{k}} + o_n(1)~.
\end{eqnarray*}
Thus, by choosing $k$ sufficiently large, we clearly have
$\limsup_{n\to\infty} \PROB\left\{     \left|\sum_{i=k+1}^n N_i B_{p_i} \xi_i M_i\right| >
       N_0\right\} <1$ as desired.
\end{proof}


\end{document}